\documentclass{article}



     \usepackage[preprint,nonatbib]{neurips_2019}



\usepackage[utf8]{inputenc} 
\usepackage[T1]{fontenc}    
\usepackage{hyperref} 
\usepackage{url}            
\usepackage{booktabs}       
\usepackage{amsfonts}       
\usepackage{nicefrac}       
\usepackage{microtype}      
\usepackage{amssymb,amsthm,amsfonts,amscd}
\usepackage{graphicx}
\usepackage{amsmath}
\usepackage{mathtools}
\usepackage{xcolor}
\usepackage{natbib}
\usepackage{booktabs}
\usepackage{array}
\usepackage{wrapfig}
\usepackage{multirow}
\usepackage{tabu}
\usepackage{makecell}

\usepackage{thmtools}
\usepackage{thm-restate}
\usepackage{cleveref}
\usepackage{algorithm}
\usepackage{algpseudocode}
\usepackage{titletoc}
\usepackage{etoc}
\usepackage{authblk}
\newtheorem{thm}{Theorem}
\newtheorem{prop}[thm]{Proposition}
\newtheorem{lma}[thm]{Lemma}
\newtheorem{cor}[thm]{Corollary}

\newcolumntype{C}[1]{>{\centering\arraybackslash}m{#1}}
\newcommand{\del}[1]{ \Delta^{\rm EP}_{#1} }
\newcommand{\nab}[1]{ \nabla^{\rm BPTT}_{#1} }
\newcommand{\norm}[1]{\left\lVert #1\right \rVert}  
\DeclareMathOperator*{\argmax}{arg\,max}

\DeclarePairedDelimiter{\ceil}{\lceil}{\rceil}

\title{Updates of Equilibrium Prop Match Gradients of Backprop Through Time in an RNN with Static Input}

\author{
  {\bfseries \small Maxence Ernoult$^{1,2}$, Julie Grollier$^2$, Damien Querlioz$^1$, Yoshua Bengio$^{3,4}$, Benjamin Scellier$^3$}\\
  
  {\small $^1$Centre de Nanosciences et de Nanotechnologies, Université Paris Sud, Université Paris-Saclay\\
  $^2$Unité Mixte de Physique, CNRS, Thales, Université Paris-Sud, Université Paris-Saclay\\
  $^3$Mila, Université de Montréal\\
  $^4$Canadian Institute for Advanced Research}
}

\hypersetup{draft}
\begin{document}

\maketitle

\begin{abstract}
Equilibrium Propagation (EP) is a  biologically inspired learning algorithm for convergent recurrent neural networks, 
i.e. RNNs that are fed by a static input $x$ and settle to a steady state. 
Training convergent RNNs consists in adjusting the weights until the steady state of output neurons coincides with a target $y$. 
Convergent RNNs can also be trained with the more conventional Backpropagation Through Time (BPTT) algorithm.
In its original formulation EP was described in the case of real-time neuronal dynamics, which is computationally costly.
In this work, we introduce a discrete-time version of EP
with simplified equations and with reduced simulation time,
bringing EP closer to practical machine learning tasks.  
We first prove theoretically, as well as numerically that the neural and weight updates of EP,
computed by \emph{forward-time} dynamics,
are step-by-step equal to the ones obtained by BPTT,
with gradients computed \emph{backward in time}. 
The equality is strict when
the transition function of the dynamics
derives from a primitive function and the steady state is maintained long enough.
We then show for more standard discrete-time neural network dynamics
that the same property is approximately respected and we subsequently demonstrate training with EP with equivalent performance to BPTT. 
In particular, we define the first convolutional architecture trained with EP achieving $\sim 1\%$ test error on MNIST, which is the lowest
error reported with EP.
These results can guide the development of deep neural networks trained with EP.
\end{abstract}

\section{Introduction}


The remarkable development of deep learning over the past years \citep{lecun2015deep} has been fostered by the use of backpropagation \citep{rumelhart1985learning} which stands as the most powerful algorithm to train neural networks.
In spite of its success, the backpropagation algorithm is not biologically plausible \citep{crick-nature1989},
and its implementation on GPUs is energy-consuming \citep{editorial_big_2018}.
Hybrid hardware-software experiments have recently demonstrated how physics and dynamics can be leveraged to achieve learning with energy efficiency \citep{romera2018vowel, ambrogio2018equivalent}.
Hence the motivation to invent novel learning algorithms
where both inference and learning could fully be achieved out of core physics.

Many biologically inspired learning algorithms have been proposed as alternatives to backpropagation to train neural networks.
Contrastive Hebbian learning (CHL) has been successfully used to train recurrent neural networks (RNNs) with static input that converge to a steady state (or `equilibrium'), such as Boltzmann machines \citep{ackley1985learning} and real-time Hopfield networks \citep{movellan1991contrastive}.
CHL proceeds in two phases, each phase converging to a steady state, where the learning rule accommodates the difference between the two equilibria.
Equilibrium Propagation (EP) \citep{Scellier+Bengio-frontiers2017} also belongs to the family of CHL algorithms to train RNNs with static input.
In the second phase of EP, the prediction error is encoded as an elastic force nudging the system towards a second equilibrium closer to the target.
Interestingly, EP also shares similar features with the backpropagation algorithm, and more specifically recurrent backpropagation (RBP) \citep{Almeida87,Pineda87}. 
It was proved in \citet{scellier2019equivalence} that neural computation in the second phase of EP is equivalent to gradient computation in RBP. 

Originally, EP was introduced in the context of real-time leaky integrate neuronal dynamics \citep{Scellier+Bengio-frontiers2017,scellier2019equivalence} whose computation involves long simulation times, hence limiting EP training experiments to small neural networks.
In this paper, we propose a discrete-time formulation of EP. 
This formulation allows demonstrating an equivalence between EP and BPTT in specific conditions, simplifies equations and speeds up training, 
and extends EP to standard neural networks including convolutional ones.
Specifically, the contributions of the present work are the following:

\begin{itemize}
    \item We introduce a discrete-time formulation of EP of which the original real-time formulation can be seen as a particular case (Section \ref{sec:ep-algo}).
    \item We show a step-by-step equality between the updates of EP and the gradients of BPTT
    when the dynamics converges to a steady state
    and the transition function of the RNN derives from a primitive function (Theorem~\ref{thm:main}, Figure~\ref{fig:theorem}).
    We say that such an RNN has the property of `gradient-descending updates' (or GDU property).
    \item We numerically demonstrate the GDU property on a small network, on fully connected layered and convolutional architectures. We show that the GDU property continues to hold approximately for more standard -- \emph{prototypical} -- neural networks even if these networks do not exactly meet the requirements of Theorem~\ref{thm:main}.
    \item We validate our approach with training experiments on different network architectures using discrete-time EP, achieving similar performance than BPTT. We show that the number of iterations in the two phases of discrete-time EP can be reduced by a factor three to five compared to the original real-time EP, without loss of accuracy. This allows us training the first convolutional architecture with EP, reaching $\sim 1\%$ test error on MNIST, which is the lowest test error reported with EP. 
\end{itemize}

\begin{figure}[ht!]
\begin{center}
   \fbox{\includegraphics[width=0.9\textwidth]{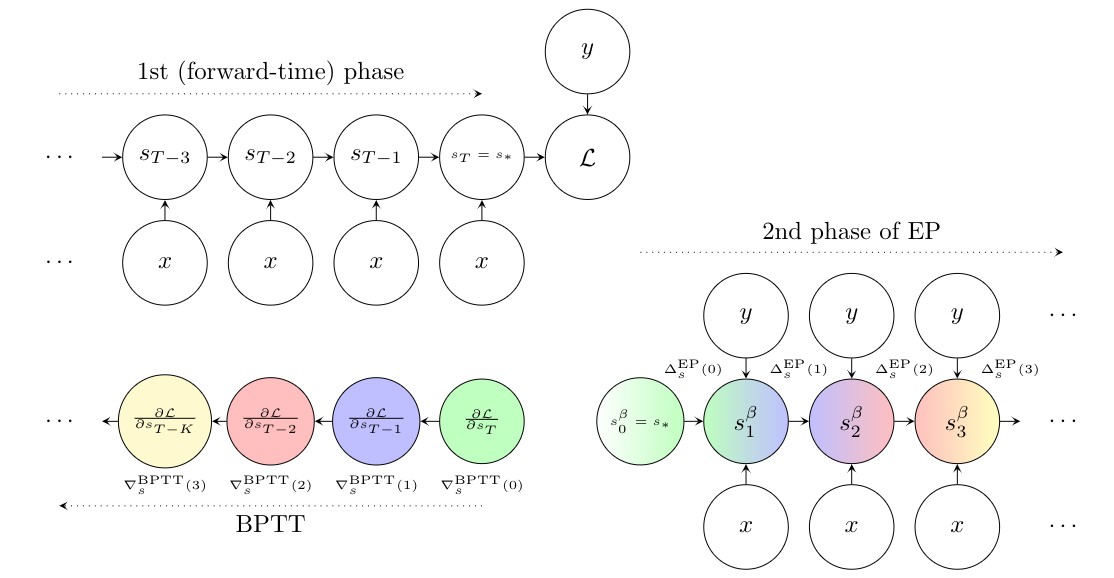}}
\end{center}
  \caption{Illustration of the property of Gradient-Descending Updates (GDU property). \textbf{Top left.} Forward-time pass (or `first phase') of an RNN with static input $x$ and target $y$. The final state $s_T$ is the steady state $s_*$. \textbf{Bottom left.} Backprop through time (BPTT). \textbf{Bottom right.} Second phase of equilibrium prop (EP). The starting state in the second phase is the final state of the first phase, i.e. the steady state $s_*$. \textbf{GDU Property (Theorem \ref{thm:main}).} Step by step correspondence between the neural updates $\Delta_s^{\rm EP}(t)$ in the second phase of EP and the gradients $\nabla_s^{\rm BPTT}(t)$ of BPTT. Corresponding computations in EP and BPTT at timestep $t=0$ (resp. $t=1,2,3$) are colored in green (resp. blue, red, yellow). Forward-time computation in EP corresponds to backward-time computation in BPTT.}
  \label{fig:theorem}
\end{figure}

\section{Background}

This section introduces the notations and basic concepts used throughout the paper.

\subsection{Convergent RNNs With Static Input}

We consider the supervised setting where we want to predict a target $y$ given an
input $x$.
The model is a dynamical system - such as a recurrent neural network (RNN) - parametrized by $\theta$ and evolving according to the dynamics:
\begin{equation}
    s_{t+1} = F \left( x,s_t,\theta \right).
    \label{eq:1st-phase}
\end{equation}
We call $F$ the \textit{transition function}.
The input of the RNN at each timestep is static, equal to $x$.
Assuming convergence of the dynamics before time step $T$, we have $s_T = s_*$ where $s_*$ is such that
\begin{equation}
    s_* = F \left( x,s_*,\theta \right).
\end{equation}
We call $s_*$ the \textit{steady state} (or fixed point, or equilibrium state) of the dynamical system.
The number of timesteps $T$ is a hyperparameter chosen large enough to ensure $s_T = s_*$.
The goal of learning is to optimize the parameter $\theta$
to minimize the loss:
\begin{equation}
	{\mathcal L^*} = \ell \left( s_*, y \right),
\end{equation}
where the scalar function $\ell$ is called  \textit{cost function}.
Several algorithms have been proposed to optimize the loss ${\mathcal L^*}$, including Recurrent Backpropagation (RBP) \citep{Almeida87,Pineda87} and Equilibrium Propagation (EP) \citep{Scellier+Bengio-frontiers2017}. 
Here, we present Backpropagation Through Time (BPTT) and Equilibrium Propagation (EP) and some of the inner mechanisms of these two algorithms, so as to enunciate the main theoretical result of this paper (Theorem~\ref{thm:main}).

\subsection{Backpropagation Through Time (BPTT)}
\label{sec:background-bptt}

With frameworks implementing automatic differentiation, optimization by gradient descent using Backpropagation Through Time (BPTT) has become the standard method to train RNNs.
In particular BPTT can be used for a convergent RNN such as the one that we study here.
To this end, we consider the loss after $T$ iterations (i.e. the cost of the final state $s_T$), denoted ${\mathcal L} = \ell \left( s_T, y \right)$, and we substitute ${\mathcal L}$ as a proxy
\footnote{The difference between the loss ${\mathcal L}$ and the loss ${\mathcal L}^*$ is explained in Appendix \ref{sec:loss-L-L*}.}
for the loss at the steady state ${\mathcal{L}^*}$.
The gradients of ${\mathcal L}$ can be computed with BPTT.

In order to state our Theorem \ref{thm:main} (Section \ref{sec:theorem}), we recall some of the inner working mechanisms of BPTT.
Eq.~(\ref{eq:1st-phase}) 
can be rewritten
in the form $s_{t+1} = F \left( x,s_t,\theta_{t+1} = \theta \right)$, where $\theta_t$ denotes the parameter of the model at time step $t$, the value $\theta$ being shared across all time steps.
This way of rewriting Eq.~(\ref{eq:1st-phase}) enables us to define the partial derivative $\frac{\partial \mathcal{L}}{\partial \theta_t}$ as the sensitivity of the loss $\mathcal{L}$ with respect to $\theta_t$ when $\theta_1, \ldots \theta_{t-1}, \theta_{t+1}, \ldots \theta_T$ remain fixed (set to the value $\theta$). With these notations, the gradient $\frac{\partial \mathcal{L}}{\partial \theta}$ reads as the sum:
\begin{equation}
\label{eq:total-gradient}
\frac{\partial {\mathcal L}}{\partial \theta} = \frac{\partial {\mathcal L}}{\partial \theta_1} + \frac{\partial {\mathcal L}}{\partial \theta_2} + \cdots + \frac{\partial {\mathcal L}}{\partial \theta_T}.
\end{equation}

BPTT computes the `full' gradient $\frac{\partial {\mathcal L}}{\partial \theta}$ by computing the partial derivatives $\frac{\partial {\mathcal L}}{\partial s_t}$ and $\frac{\partial {\mathcal L}}{\partial \theta_t}$ iteratively and efficiently, backward in time, using the chain rule of differentiation.
Subsequently, we denote the gradients that BPTT computes:
\begin{align}
\forall t \in [0, T - 1]:
\left\{
\begin{array}{ll}
   \displaystyle \nab{s}(t) = \frac{\partial {\mathcal L}}{\partial s_{T-t}}\\
 \displaystyle \nab{\theta}(t)  = \frac{\partial {\mathcal L}}{\partial \theta_{T-t}},
\end{array}
\right.
\end{align}
so that
\begin{equation}
\frac{\partial \mathcal{L}}{\partial \theta} = \sum_{t=0}^{T-1} \nabla_{\theta}^{\rm BPTT}(t).
\label{bptt-grad}
\end{equation}

More details about BPTT are provided in Appendix~\ref{sec:bptt}.

\section{Equilibrium Propagation (EP) - Discrete Time Formulation}
\label{sec:ep}

\subsection{Algorithm}
\label{sec:ep-algo}

In its original formulation, Equilibrium Propagation (EP) was introduced in the case of real-time dynamics \citep{Scellier+Bengio-frontiers2017,scellier2019equivalence}.
The first theoretical contribution of this paper is to adapt the theory of EP to discrete-time dynamics.\footnote{We explain in Appendix~\ref{sec:primitive-energy} the relationship between the discrete-time setting (resp. the primitive function $\Phi$) of this paper and the real-time setting (resp. the energy function $E$) of \citet{Scellier+Bengio-frontiers2017,scellier2019equivalence}.}
EP is an alternative algorithm to compute the gradient of ${\mathcal L}^*$ in the particular case where the transition function $F$ derives from a scalar function $\Phi$, i.e. with $F$ of the form $F(x,s,\theta) = \frac{\partial \Phi}{\partial s}(x,s,\theta)$.
In this setting, the dynamics of Eq.~(\ref{eq:1st-phase}) rewrites:
\begin{equation}
\forall t \in [0, T-1], \qquad s_{t+1} = \frac{\partial \Phi}{\partial s}(x, s_t, \theta).
\end{equation}
This constitutes the first phase of EP.
At the end of the first phase,
we have reached steady state, i.e. $s_T = s_*$.
In the second phase of EP, starting from the steady state $s_*$, an extra term $\beta \; \frac{\partial \ell}{\partial s}$ (where $\beta$ is a positive scaling factor) is introduced in the dynamics of the neurons and acts as an external force nudging the system dynamics towards decreasing the cost function $\ell$. Denoting $s_0^\beta, s_1^\beta, s_2^\beta, \ldots$ the sequence of states in the second phase (which depends on the value of $\beta$), the dynamics is defined as
\begin{equation}
	s_0^\beta = s_* \qquad \text{and} \qquad \forall t \geq 0, \quad s_{t+1}^\beta  = \frac{\partial \Phi}{\partial s} \left( x,s_t^\beta,\theta \right) - \beta \; \frac{\partial \ell}{\partial s} \left( s_t^\beta,y \right).
\label{eq:2nd-phase}	
\end{equation}
The network eventually settles to a new steady state $s_*^\beta$.
It was shown in \citet{Scellier+Bengio-frontiers2017} that the gradient of the loss ${\mathcal L}^*$ can be computed based on the two steady states $s_*$ and $s_*^\beta$. 
More specifically,
in the limit $\beta \to 0$,
\begin{equation}
 \frac{1}{\beta} \left( \frac{\partial \Phi}{\partial \theta} \left( x, s_*^\beta,\theta \right) -  \frac{\partial \Phi}{\partial \theta} \left( x,s_*,\theta \right) \right) \to -\frac{\partial {\mathcal L}^*}{\partial \theta}.
\label{eq:ep-grad}
\end{equation}

In fact, we can prove a stronger result.
For fixed $\beta > 0$ we define the neural and weight updates
\begin{align}
\forall t \geq 0:
\left\{
\begin{array}{ll}
    \displaystyle \del{s}(\beta,t) &= \frac{1}{\beta} \left( s_{t+1}^\beta-s_t^\beta \right), \\
   \displaystyle \del{\theta}(\beta,t) &= \frac{1}{\beta} \left( \frac{\partial \Phi}{\partial \theta} \left( x,s_{t+1}^\beta,\theta \right) -  \frac{\partial \Phi}{\partial \theta} \left( x,s_t^\beta,\theta \right) \right),
\end{array}
\right.
\label{eq:delta-ep}
\end{align}
and note that Eq.~(\ref{eq:ep-grad}) rewrites as the following telescoping sum:
\begin{equation}
\sum_{t =0}^\infty \del{\theta}(\beta,t) \to - \frac{\partial \mathcal{L}^*}{\partial \theta} \qquad \text{as} \qquad \beta \to 0.
\label{eq:ep-grad2}
\end{equation}
We can now state our main theoretical result
(Theorem \ref{thm:main} below, proved in Appendix \ref{sec:bptt-ep}).

\subsection{Forward-Time Dynamics of EP Compute Backward-Time Gradients of BPTT}
\label{sec:theorem}

BPTT and EP compute the gradient of the loss in very different ways: while the former algorithm iteratively adds up gradients going backward in time, as in Eq.~(\ref{bptt-grad}), the latter algorithm adds up weight updates going forward in time, as in Eq.~(\ref{eq:ep-grad2}).
In fact, under a condition stated below, the sums are equal term by term: there is a step-by-step correspondence between the two algorithms. 

\begin{restatable}[Gradient-Descending Updates, GDU]{thm}{theor}
\label{thm:main}
In the setting with a transition function of the form $F \left( x,s,\theta \right) = \frac{\partial \Phi}{\partial s} \left( x,s,\theta \right)$,
let $s_0, s_1, \ldots, s_T$ be the convergent sequence of states and denote $s_* = s_T$ the steady state.
If we further assume that there exists some step $K$ where $0 < K \leq T$ such that $s_* = s_T = s_{T-1} = \ldots s_{T-K}$, then, in the limit $\beta \to 0$, the first $K$ updates in the second phase of EP are equal to the negatives of the first $K$ gradients of BPTT, i.e.
\begin{equation}
    \forall t=0,1,\ldots,K, \qquad \del{s}(\beta,t) \to - \nab{s}(t) \qquad \text{and} \qquad \del{\theta}(\beta,t) \to - \nab{\theta}(t).
    \label{eq:thm-main}
\end{equation}
\end{restatable}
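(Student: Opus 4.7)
The plan is to derive matched linear recursions on both sides and then observe that, under the symmetry of mixed partial derivatives of $\Phi$, the two recursions coincide term-by-term with an overall sign flip. The steady-state hypothesis is exactly what lets one evaluate every Jacobian appearing in the recursion at the single point $s_*$, reducing the whole argument to a constant-coefficient first-order difference equation.

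First I would unroll the BPTT backward pass. Applying the chain rule to $s_{t+1} = F(x, s_t, \theta)$ gives the standard recursion $\nab{s}(t) = \left(\frac{\partial F}{\partial s}(x, s_{T-t},\theta)\right)^{\!\top} \nab{s}(t-1)$, with initial condition $\nab{s}(0) = \frac{\partial \ell}{\partial s}(s_*, y)$, and an analogous formula for $\nab{\theta}(t)$. Under the hypothesis $s_{T-K} = \ldots = s_T = s_*$, every Jacobian in the first $K$ steps is evaluated at $s_*$, so the recursion is stationary with matrix $H := \frac{\partial F}{\partial s}(x, s_*, \theta) = \frac{\partial^2 \Phi}{\partial s^2}(x, s_*, \theta)$. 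Crucially, $H$ is symmetric, because it is the Hessian of the scalar $\Phi$; this is where the primitive-function assumption starts paying off.

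Second I would expand the nudged EP dynamics to first order in $\beta$. Define $\tilde s_t := \lim_{\beta \to 0} \frac{1}{\beta}\bigl(s_t^\beta - s_*\bigr)$. Subtracting the fixed-point identity $s_* = \frac{\partial \Phi}{\partial s}(x,s_*,\theta)$ from Eq.~(\ref{eq:2nd-phase}) and Taylor-expanding about $s_*$ yields $\tilde s_{t+1} = H \tilde s_t - \frac{\partial \ell}{\partial s}(s_*,y)$, with $\tilde s_0 = 0$. Setting $u_t := \tilde s_{t+1} - \tilde s_t$ and subtracting two consecutive versions of this relation makes the driving term cancel, giving $u_t = H u_{t-1}$ with $u_0 = -\frac{\partial \ell}{\partial s}(s_*,y)$. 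Comparison with the BPTT recursion (using $H^{\!\top} = H$) gives $u_t = -\nab{s}(t)$, which is exactly the claim $\del{s}(\beta,t) \to -\nab{s}(t)$. For the parameter update, Taylor-expanding the difference $\frac{\partial \Phi}{\partial \theta}(x,s_{t+1}^\beta,\theta) - \frac{\partial \Phi}{\partial \theta}(x,s_t^\beta,\theta)$ about $s_*$ and dividing by $\beta$ gives $\del{\theta}(\beta,t) \to \frac{\partial^2 \Phi}{\partial s\, \partial \theta}(x,s_*,\theta)\, u_t$. By Clairaut's theorem, $\frac{\partial^2 \Phi}{\partial s\, \partial \theta} = \left(\frac{\partial F}{\partial \theta}\right)^{\!\top}$, so this coincides with the BPTT expression for $-\nab{\theta}(t)$.

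The main technical obstacle is to justify the linearization uniformly in $t$: one must show that $s_t^\beta - s_* = O(\beta)$ for each fixed $t$ in the relevant range, so that the second-order remainders in the Taylor expansions are $O(\beta^2)$ and vanish once divided by $\beta$. I would handle this by a short induction on $t$, using smoothness of $\Phi$ and $\ell$ together with the expansion $s_{t+1}^\beta - s_* = H(s_t^\beta - s_*) - \beta \frac{\partial \ell}{\partial s}(s_*,y) + O(\beta^2)$, which inductively preserves the $O(\beta)$ bound. The conceptual point I would highlight is that without the primitive-function structure, BPTT contains $\left(\frac{\partial F}{\partial s}\right)^{\!\top}$ while EP produces $\frac{\partial F}{\partial s}$; the equality $\del{}(\beta,t) \to -\nab{}(t)$ holds precisely because the symmetry of the Hessian of $\Phi$ (and the matching symmetry of mixed $s$--$\theta$ partials) collapses these two matrices into one, so the GDU property is tied exactly to the assumption $F = \partial \Phi/\partial s$.
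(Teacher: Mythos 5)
Your proposal is correct and follows essentially the same route as the paper's own proof: both derive the stationary linear recursion $\nab{s}(t) = H^{\top}\nab{s}(t-1)$ for BPTT under the steady-state hypothesis, derive the matching recursion $\del{s}(t+1) = H\,\del{s}(t)$ with initial condition $-\frac{\partial \ell}{\partial s}(s_*,y)$ for EP by first-order analysis in $\beta$ (the paper via differentiating the nudged dynamics in $\beta$, you via an equivalent Taylor expansion with $O(\beta)$ remainder control), and conclude by the symmetry $H^{\top}=H$ of the Hessian of $\Phi$ together with equality of mixed $s$--$\theta$ partials. Your explicit induction justifying $s_t^{\beta}-s_* = O(\beta)$ is a slightly more careful rendering of a step the paper handles implicitly through smooth dependence on $\beta$, but the argument is the same.
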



\section{Experiments}

This section uses Theorem~\ref{thm:main} as a tool to design neural networks that are trainable with EP: if a model satisfies the GDU property of Eq.~\ref{eq:thm-main}, then we expect EP to perform as well as BPTT on this model.
After introducing our protocol (Section~\ref{sec:gradient-matching}), we define the \textit{energy-based setting} and \textit{prototypical setting} where the conditions of Theorem~\ref{thm:main} are exactly and approximately met respectively (Section~\ref{sec:eb-prototypical-settings}). We show the GDU property on a toy model (Fig.~\ref{fig:toy-model}) and on fully connected layered architectures in the two settings (Section \ref{sec:depth}). We define a convolutional architecture in the prototypical setting (Section \ref{subsec-conv}) which also satisfies the GDU property. Finally, we validate our approach by training these models with EP and BPTT (Table \ref{table-results}).

\begin{figure}[ht!]
\begin{center}
   \includegraphics[scale=0.42]{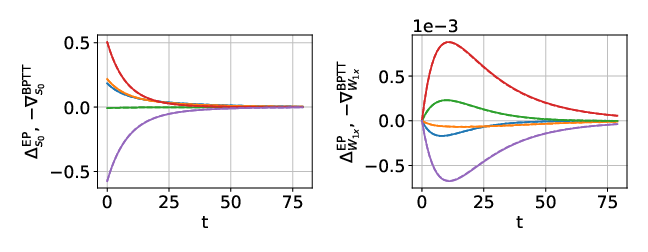}
      \hfill 
   \includegraphics[scale=0.17]{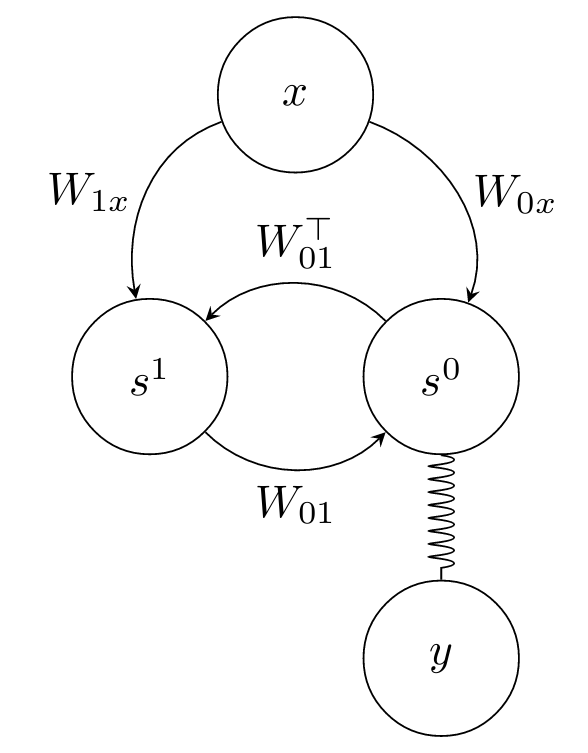}
\end{center}
\caption{Demonstrating the property of gradient-descending updates in the energy-based setting on a toy model with dummy data $x$ and a target $y$ elastically nudging the output neurons $s^0$ (right).
Dashed and solid lines represent $\Delta^{\rm EP}$ and $-\nabla^{\rm BPTT}$ processes respectively and perfectly coincide for 5 randomly selected neurons (left) and synapses (middle).
Each randomly selected neuron or synapse corresponds to one color. Details can be found in Appendix~\ref{exp:toymodel}.
}
\label{fig:toy-model}
\end{figure}

\subsection{Demonstrating the Property of Gradient-Descending Updates (GDU Property)}
\label{sec:gradient-matching}

\paragraph{Property of Gradient-Descending Updates.} We say that a convergent RNN model fed with a fixed input has the \textit{GDU property} if during the second phase, the updates it computes by EP ($\del{}$) on the one hand and the gradients it computes by BPTT ($-\nab{}$) on the other hand are `equal' -  or `approximately equal', as measured per the RMSE (Relative Mean Squared Error) metric.

\paragraph{Relative Mean Squared Error (RMSE).} In order to quantitatively measure how well the GDU property is satisfied, we introduce a metric which we call Relative Mean Squared Error (RMSE) such that RMSE($\Delta^{\rm EP}$, -$\nabla^{\rm BPTT}$) measures the distance between $\Delta^{\rm EP}$ and $-\nabla^{\rm BPTT}$ processes, averaged over time, over neurons or synapses (layer-wise) and over a mini-batch of samples - see Appendix~\ref{exp:rmse} for the details.

\paragraph{Protocol.} In order to measure numerically if a given model satisfies the GDU property, we proceed as follows. Considering an input $x$ and associated target $y$, we perform the first phase for T steps. Then we perform on the one hand BPTT for $K$ steps (to compute the gradients $\nabla^{\rm BPTT}$), on the other hand EP for $K$ steps (to compute the neural updates $\Delta^{\rm EP}$) and compare the gradients and neural updates provided by the two algorithms, either qualitatively by looking at the plots of the curves (as in Figs.~\ref{fig:toy-model} and \ref{th-conv}), or quantitatively by computing their RMSE (as in Fig.~\ref{RMSE}).

\subsection{Energy-Based Setting and Prototypical Setting}
\label{sec:eb-prototypical-settings}

\paragraph{Energy-based setting.} The system is defined in terms of a primitive function of the form:
\begin{equation}
\Phi_\epsilon(s,W) = (1-\epsilon)\frac{1}{2}\|s\|^2 + \epsilon \; \sigma(s)^\top \cdot W \cdot \sigma(s),
\label{eq:energy-based}
\end{equation}
where $\epsilon$ is a discretization parameter, $\sigma$ is an activation function and $W$ is a symmetric weight matrix.
In this setting, we consider $\del{}(\beta \epsilon, t)$ instead of $\del{}(\beta, t)$ and write $\del{}(t)$ for simplicity, so that:
\begin{equation}
    \del{s}(t) = \frac{s_{t+1}^{\beta \epsilon}-s_t^{\beta \epsilon}}{\beta\epsilon}, \quad
    \del{W}(t) = \frac{1}{\beta} \left( \sigma \left( s_{t+1}^{\beta \epsilon} \right)^\top \cdot \sigma \left( s_{t+1}^{\beta \epsilon} \right) - \sigma \left( s_t^{\beta \epsilon} \right)^\top \cdot \sigma \left( s_t^{\beta \epsilon} \right) \right).
   \label{delta-fc-canon}
\end{equation}
With $\Phi_\epsilon$ as a primitive function and with the hyperparameter $\beta$ rescaled by a factor $\epsilon$, we recover the discretized version of the real-time setting of \citet{Scellier+Bengio-frontiers2017}, i.e. the Euler scheme of $\frac{ds}{dt}=-\frac{\partial E}{\partial s} - \beta \frac{\partial \ell}{\partial s}$ with $E = \frac{1}{2}\|s\|^2 - \sigma(s)^\top \cdot W \cdot \sigma(s)$ -- see Appendix \ref{sec:primitive-energy}.
Fig.~\ref{fig:toy-model} qualitatively demonstrates Theorem~\ref{thm:main} in this setting on a toy model.

\paragraph{Prototypical Setting.} In this case, the dynamics of the system does not derive from a primitive function $\Phi$. Instead, the dynamics is directly defined as: 
\begin{equation}
s_{t+1} = \sigma \left( W \cdot s_t \right).
\label{def-prototypical}
\end{equation}
Again, $W$ is assumed to be a symmetric matrix.
The dynamics of Eq.~(\ref{def-prototypical}) is a standard and simple neural network dynamics.
Although the model is not defined in terms of a primitive function, note that $s_{t+1} \approx \frac{\partial \Phi}{\partial s} \left( s_t,W \right)$ with $\Phi(s,W) = \frac{1}{2} s^\top \cdot W \cdot s$ if we ignore the activation function $\sigma$.
Following Eq.~(\ref{eq:delta-ep}), we define:
\begin{equation}
    \del{s}(t) = \frac{1}{\beta} \left( s_{t+1}^\beta-s_t^\beta \right), \qquad
    \del{W}(t) = \frac{1}{\beta} \left( s_{t+1}^{\beta^\top} \cdot s_{t+1}^\beta - s_t^{\beta^\top} \cdot s_t^\beta \right).
   \label{delta-fc}
\end{equation}

\subsection{Effect of Depth and Approximation}
\label{sec:depth}

We consider a fully connected layered architecture where layers $s^n$ are labelled in a backward fashion: $s^0$ denotes the output layer, $s^1$ the last hidden layer, and so forth. Two consecutive layers are reciprocally connected with tied weights with the convention that $W_{n,n+1}$ connects $s^{n+1}$ to $s^{n}$.
We study this architecture in the energy-based and prototypical setting as described per Equations (\ref{eq:energy-based}) and (\ref{def-prototypical}) respectively - see details in Appendix \ref{exp:def-fc-eb} and \ref{exp:def-fc-p}.
We study the GDU property layer-wise, e.g.
RMSE($\Delta_{s^{n}}^{\rm EP}$, -$\nabla_{s^n}^{\rm BPTT}$) measures the distance between the $\Delta_{s^{n}}^{\rm EP}$ and $-\nabla_{s^n}^{\rm BPTT}$ processes, averaged over all elements of layer $s^{n}$. 

\begin{figure}[ht!]
\begin{center}
   \includegraphics[scale=0.4]{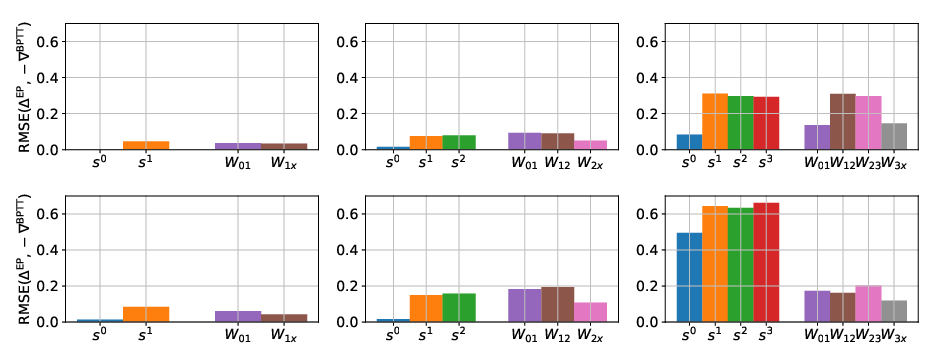}
\end{center}
  \caption{RMSE analysis in the energy-based (top) and prototypical (bottom) setting. For one given architecture, each bar is labelled by a layer or synapses connecting two layers, e.g. the orange bar above $s^1$ represents ${\rm RMSE}(\Delta_{s^1}^{\rm EP}, -\nabla_{s^{1}}^{\rm BPTT})$. For each architecture, the recurrent hyperparameters $T$, $K$ and $\beta$ have been tuned to make the $\Delta^{\rm EP}$ and $-\nabla^{\rm BPTT}$ processes match best.}
\label{RMSE}
\end{figure}

We display in Fig.~\ref{RMSE} the RMSE, layer-wise for one, two and three hidden layered architecture (from left to right), in the energy-based (upper panels) and prototypical (lower panels) settings, so that each architecture in a given setting is displayed in one panel - see Appendix \ref{exp:def-fc-eb} and \ref{exp:def-fc-p} for a detailed description of the hyperparameters and curve samples. In terms of RMSE, we can see that the GDU property is best satisfied in the energy-based setting with one hidden layer where RMSE is around $\sim 10^{-2}$ (top left). When adding more hidden layers in the energy-based setting (top middle and top right), the RMSE increases to $\sim 10^{-1}$, with a greater RMSE when going away from the output layer.
The same is observed in the prototypical setting when we add more hidden layers (lower panels).
Compared to the energy-based setting, although the RMSEs associated with neurons are significantly higher in the prototypical setting, the RMSEs associated with synapses are similar or lower.
On average, the weight updates provided by EP match well the gradients of BPTT, in the energy-based setting as well as in the prototypical setting.

\subsection{Convolutional Architecture}
\label{subsec-conv}

\begin{figure}[ht!]
\begin{center}
\includegraphics[scale=0.46]{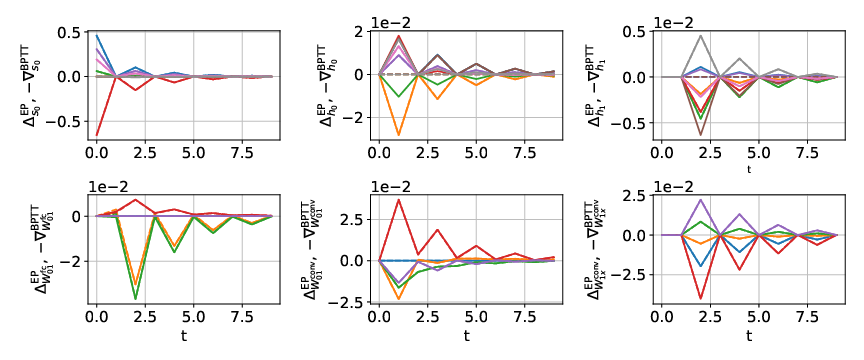}
\end{center}
\caption{Demonstrating the GDU property with the convolutional architecture on MNIST. Dashed and continuous lines represent $\Delta^{\rm EP}$ and $-\nabla^{\rm BPTT}$ processes respectively, for 5 randomly selected neurons (top) and synapses (bottom) in each layer.
Each randomly selected neuron or synapse corresponds to one color.
Dashed and continuous lines mostly coincide.
Some $\Delta^{\rm EP}$ processes collapse to zero as an effect of the non-linearity, see Appendix \ref{exp:conv} for details. Interestingly, the $\del{s}$ and $-\nab{s}$ processes are saw-teeth-shaped ; Appendix~\ref{subsec-sawteeth} accounts for this phenomenon.}
\label{th-conv}
\end{figure}

In our convolutional architecture, $h^n$ and $s^n$ denote convolutional and fully connected layers respectively. $W^{\rm fc}_{n, n+1}$ and $W^{\rm conv}_{n, n+1}$ denote the fully connected weights connecting $s^{n+1}$ to $s^n$ and the filters connecting $h^{n+1}$ to $h^n$, respectively.
We define the dynamics as:
\begin{align}
\left\{
\begin{array}{ll}
s^n_{t + 1} &= \sigma \left(W^{\rm fc}_{n n+1}\cdot s^{n+1}_{t} + W^{{\rm fc}^\top}_{n -1 n}\cdot s^{n-1}_{t} \right)\\
h^{n}_{t+1} &= \sigma \left( \mathcal{P}\left(W^{\rm conv}_{n, n+1}*h^{n+1}_t\right) + \tilde{W}_{n-1, n}^{\rm conv}*\mathcal{P}^{-1}\left(h^{n-1}_{t}\right)\right),
\end{array} 
\right.
\end{align}
where $*$ and $\mathcal{P}$ denote convolution and pooling, respectively. Transpose convolution is defined through the convolution by the flipped kernel $\tilde{W}^{\rm conv}$ and $\mathcal{P}^{-1}$ denotes inverse pooling - see Appendix~\ref{exp:conv} for a precise definition of these operations and their inverse.
Noting $N_{\rm fc}$ and $N_{\rm conv}$ the number of fully connected and convolutional layers respectively, we can define the function:
\begin{equation}
    \Phi(x, \{s^{n}\}, \{h^{n}\}) = \sum_{n = 0}^{N_{\rm conv} - 1} h^{n}\bullet\mathcal{P}\left(W_{n, n+1}^{\rm conv}*h^{n+1}\right) + \sum_{n=0}^{N_{\rm fc - 1}} s^{n \top}\cdot W_{n, n+1}^{\rm fc}\cdot s^{n+1}, 
\end{equation}
with $\bullet$ denoting generalized scalar product. We note that $s^n_{t+1} \approx \frac{\partial \Phi}{\partial s}(t)$ and $h^n_{t+1} \approx \frac{\partial \Phi}{\partial h}(t)$ if we ignore the activation function $\sigma$.
We define $\Delta_{s^n}^{\rm EP}$, $\Delta_{h^n}^{\rm EP}$ and $\Delta_{W^{\rm fc}}^{\rm EP}$ as in Eq.~\ref{delta-fc}.
As for $\Delta_{W^{\rm conv}}^{\rm EP}$, we follow the definition of Eq.~(\ref{eq:delta-ep}):
\begin{equation}
\Delta_{W_{n n+1}^{\rm conv}}^{\rm EP}(t)  =  \frac{1}{\beta} \left(\mathcal{P}^{-1}(h^{n, \beta}_{t+1})*h^{n+1, \beta}_{t+1} - \mathcal{P}^{-1}(h^{n, \beta}_{t})*h^{n+1, \beta}_{t} \right)  
\label{deltaconv-main}
\end{equation}
As can be seen in Fig.~\ref{th-conv}, the GDU property is qualitatively very well satisfied: Eq.~(\ref{deltaconv-main}) can thus be safely used as a learning rule. More precisely however, some $\Delta^{\rm EP}_{s^n}$ and $\Delta^{\rm EP}_{h^n}$ processes collapse to zero as an effect of the non-linearity used (see Appendix~\ref{exp} for greater details): the EP error signals cannot be transmitted through saturated neurons, resulting in a RMSE of $\sim 10^{-1}$ for the network parameters - see Fig.~\ref{RMSE:conv} in Appendix~\ref{exp:conv}.

\begin{table}
    \caption{
    {\bfseries Above double rule}: 
    training results on MNIST with EP benchmarked against BPTT, in the energy-based and prototypical settings. "EB" and "P" respectively denote "energy-based" and "prototypical", "-$\#$h" stands for the number of hidden layers. We indicate over five trials the mean and standard deviation for the test error, the mean error in parenthesis for the train error.
    $T$ (resp. $K$) is the number of iterations in the first (resp. second) phase.
    {\bfseries Below double rule}:
    best training results on MNIST with EP reported in the literature.
    }
\begin{tabular}{@{}lccccccc@{}} \toprule
{}&\multicolumn{2}{c}{EP (error $\%$) }& \multicolumn{2}{c}{BPTT (error $\%$)}&{T}&{K}&{Epochs}\\
\cmidrule(r){2-3}\cmidrule(r){4-5}
{}& Test & Train & Test & Train & {} & {} & {} \\
\midrule

    {EB-1h} & $2.06 \pm 0.17$ & $(0.13)$ & $2.11 \pm 0.09$ & $(0.46) $  & 100 & 12  & 30\\ 
    {EB-2h} & $ 2.01 \pm 0.21$ & $(0.11) $ & $ 2.02 \pm 0.12 $ & $(0.29)$ & 500 & 40  & 50\\ 
    {P-1h} & $2.00 \pm 0.13$ & $(0.20) $ & $ 2.00 \pm 0.12 $& $(0.55) $ & 30 & 10  &  30 \\  
    {P-2h} & $ 1.95 \pm 0.10$ &  $(0.14)  $ & $ 2.09 \pm 0.12$ & $(0.37) $ & 100 & 20 &  50 \\  
    {P-3h} & $2.01 \pm 0.18$ & $(0.10) $ & $2.30 \pm 0.17$&  $(0.32) $ & 180 & 20  &  100\\      
    {P-conv} & $\mathbf{1.02 \pm 0.04}$ &  $(0.54) $ & $ 0.88 \pm 0.06 $ & $(0.12) $ & 200 & 10 &  40\\
\midrule
\midrule
\citep{Scellier+Bengio-frontiers2017} & $\sim 2.2 $ & ($\sim 0$)& - & - & 100 & 6 & 60 \\ 
\citep{OConnorICML2018} & $ 2.37$ & $(0.15)$ & - & - & 100 & 50 & 25 \\
\citep{pmlr-v89-o-connor19a} & $ 2.19$ & $ - $ & - & - & 4 & - & 50 \\
\bottomrule
\end{tabular}
    \label{table-results}
\end{table}

\section{Discussion}

Table~\ref{table-results} shows the accuracy results on MNIST
of several variations of our approach and of the literature. 
First, EP overall performs as well or practically as well as BPTT in terms of test accuracy in all situations.
Second, no degradation of accuracy is seen between using the prototypical (P) rather than the energy-based (EB) setting, although the prototypical setting requires  three to five times less time steps in the first phase (T).
Finally, the best EP result, $\sim 1 \%$ test error, is obtained with our convolutional architecture. This is also the best performance reported in the literature  on MNIST training with EP. BPTT achieves $~0.90\%$ test error using the same architecture. This slight degradation is due to saturated neurons which do no route error signals (as reported in the previous section). 
The prototypical situation allows using highly reduced number of time steps in the first phase than \citet{Scellier+Bengio-frontiers2017} and \citet{OConnorICML2018}. On the other hand, \citet{pmlr-v89-o-connor19a} manages to cut this number even more. This comes at the cost of using an extra network to learn proper initial states for the EP network, which is not needed in our approach.

Overall, our work broadens the scope of EP from its original formulation 
for biologically motivated real-time dynamics 
and sheds new light on its practical understanding.
We first extended EP to a discrete-time setting, which
reduces its computational cost and allows addressing situations closer to conventional machine learning.  Theorem~\ref{thm:main} demonstrated that the gradients provided by EP are strictly equal to the gradients computed with BPTT in specific conditions.
Our numerical experiments confirmed the theorem and showed that its range of applicability extends well beyond the original formulation of EP to prototypical neural networks widely used today. 
These results highlight that, in principle, EP can reach the same performance as BPTT on benchmark tasks (for RNN models with fixed input).
Layer-wise analysis of the gradients computed by EP and BPTT show that the deeper the layer, the more difficult it becomes to ensure the GDU property. 
On top of non-linearity effects, this is mainly due to the fact that the deeper the network, the longer it takes to reach equilibrium. 

While this may be a conundrum for current processors, 
it should not be an issue for alternative computing schemes.
Physics research is now looking at
neuromorphic computing approaches that leverage the transient dynamics of physical devices for computation \citep{torrejon2017neuromorphic,romera2018vowel,feldmann2019all}. 
In such systems, based on magnetism or optics, dynamical equations are solved directly by the physical circuits and components, in parallel and at speed much higher than processors.
On the other hand, in such systems, the nonlocality of backprop is a major concern  \citep{ambrogio2018equivalent}.
In this context, EP appears as a powerful approach as computing gradients only requires measuring the system at the end of each phase, and going backward in time is not needed. 
In a longer term, interfacing the algorithmics of EP with device physics could help cutting drastically the cost of inference and learning of conventional computers, and thereby address one of the biggest technological limitations of deep learning.

\newpage
\section*{Acknowledgments}

The authors would like to thank Joao Sacramento for feedback and discussions, as well as
NSERC, CIFAR, Samsung and Canada Research Chairs for funding.
Julie Grollier acknowledges funding from the European Research Council ERC under grant bioSPINspired 682955.

\bibliographystyle{abbrvnat}

\newpage

\appendix
\part*{Appendix}





\section{Proof of Theorem \ref{thm:main} - Step-by-Step Equivalence of EP and BPTT}
\label{sec:bptt-ep}

In this section, we prove that the first neural updates and synaptic updates performed in the second phase of EP are equal to the first gradients computed in BPTT (Theorem \ref{thm:main}).
In this section we choose a slightly different convention for the definition of the $\nabla^{\rm BPTT}_\theta(t)$ and $\Delta^{\rm EP}_\theta(t)$ processes, with an index shift. We explain in Appendix \ref{sec:index-shift} why this convention is in fact more natural.

\subsection{Backpropagation Through Time (BPTT)}
\label{sec:bptt}

Recall that we are considering an RNN (with fixed input $x$ and target $y$) whose dynamics $s_0, s_1, \ldots, s_T$ and loss ${\mathcal L}$ are defined by\footnote{Note that we choose here a different convention for the definition of $\theta_t$ compared to the definition of Section \ref{sec:background-bptt}. We motivate this index shift in Appendix \ref{sec:index-shift}.}
\begin{equation}
    \label{eq:definition-L}
    \forall t = 0, 1, \ldots T-1, \qquad s_{t+1} = F \left( x,s_t,\theta_t=\theta \right), \qquad {\mathcal L} = \ell \left( s_T, y \right).
\end{equation}
We denote the gradients computed by BPTT
\begin{align}
 \forall t = 0, 1, \ldots T, \qquad \nabla^{\rm BPTT}_s(t) & = \frac{\partial {\mathcal L}}{\partial s_{T-t}}, \\
 \forall t = 1, 2, \ldots T, \qquad \nabla^{\rm BPTT}_\theta(t) & = \frac{\partial {\mathcal L}}{\partial \theta_{T-t}}.
\end{align}
The gradients $\nabla^{\rm BPTT}_s(t)$ and $\nabla^{\rm BPTT}_\theta(t)$ are the `elementary gradients' (as illustrated in Fig.~\ref{fig:bptt}) computed as intermediary steps in BPTT in order to compute the `full gradient' $\frac{\partial {\mathcal L}}{\partial \theta}$.

\begin{prop}[Backpropagation Through Time]
\label{prop:bptt}
The gradients $\nabla^{\rm BPTT}_s(t)$ and $\nabla^{\rm BPTT}_\theta(t)$ can be computed using the recurrence relationship
\begin{align}
\nabla^{\rm BPTT}_s(0) & = \frac{\partial \ell}{\partial s} \left( s_T,y \right), \\
 \forall t=1,2,\ldots,T,    \qquad \nabla^{\rm BPTT}_s(t) & = \frac{\partial F}{\partial s} \left( x,s_{T-t},\theta \right)^\top \cdot \nabla^{\rm BPTT}_s(t-1), \label{eq:bptt-state} \\
  \forall t=1,2,\ldots,T,    \qquad \nabla^{\rm BPTT}_\theta(t) & = \frac{\partial F}{\partial \theta} \left( x,s_{T-t},\theta \right)^\top \cdot \nabla^{\rm BPTT}_s(t-1). \label{eq:bptt-param}
\end{align}
\end{prop}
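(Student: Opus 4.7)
The plan is to prove Proposition~\ref{prop:bptt} by direct application of the multivariate chain rule, with a short induction on $t$. The main point is to identify, for each $t$, the unique computational path from $s_{T-t}$ (or from $\theta_{T-t}$) to $\mathcal{L}$ induced by the unrolled dynamics of Eq.~(\ref{eq:definition-L}), so that the chain rule produces exactly one factor involving $F$ and one factor that matches the inductive hypothesis.

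First I would handle the base case $t=0$. Since $\mathcal{L} = \ell(s_T, y)$ depends on the state variables only through $s_T$, I would simply unfold the definition: $\nabla^{\rm BPTT}_s(0) = \partial \mathcal{L}/\partial s_T = \frac{\partial \ell}{\partial s}(s_T, y)$, using that $y$ is held fixed. Next, I would set up the induction on $t \in \{1,\dots,T\}$. The key observation is that, under the convention where $\theta_1,\ldots,\theta_T$ are treated as independent local copies of $\theta$ and we freeze all but $s_{T-t}$ (respectively $\theta_{T-t}$), the state $s_{T-t}$ enters the computation of $\mathcal{L}$ only through $s_{T-t+1} = F(x, s_{T-t}, \theta_{T-t})$, which is then propagated forward by the frozen transitions $F(x, \cdot, \theta)$ to eventually produce $s_T$.

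Using this observation and the chain rule, I would write
\begin{equation*}
\frac{\partial \mathcal{L}}{\partial s_{T-t}} \;=\; \frac{\partial F}{\partial s}(x, s_{T-t}, \theta)^\top \cdot \frac{\partial \mathcal{L}}{\partial s_{T-t+1}},
\end{equation*}
and then identify the right-hand factor as $\nabla^{\rm BPTT}_s(t-1)$ by the definition at index $t-1$, giving Eq.~(\ref{eq:bptt-state}). An entirely analogous argument applies to $\theta_{T-t}$: its only outgoing edge in the unrolled graph goes to $s_{T-t+1}$ via $F$, so
\begin{equation*}
\frac{\partial \mathcal{L}}{\partial \theta_{T-t}} \;=\; \frac{\partial F}{\partial \theta}(x, s_{T-t}, \theta)^\top \cdot \frac{\partial \mathcal{L}}{\partial s_{T-t+1}} \;=\; \frac{\partial F}{\partial \theta}(x, s_{T-t}, \theta)^\top \cdot \nabla^{\rm BPTT}_s(t-1),
\end{equation*}
which is Eq.~(\ref{eq:bptt-param}). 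Note that this second identity does not itself require induction — it reduces to the already-computed $\nabla^{\rm BPTT}_s(t-1)$.

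The proof is essentially bookkeeping, so there is no deep obstacle. The only subtlety I expect to be careful about is the index-shift convention fixed in Eq.~(\ref{eq:definition-L}) (with $s_{t+1} = F(x, s_t, \theta_t)$ rather than $\theta_{t+1}$), since it determines which local $\theta_{T-t}$ is the one appearing in the transition producing $s_{T-t+1}$; getting this right is what makes the Jacobian $\partial F/\partial \theta$ be evaluated at $s_{T-t}$ rather than $s_{T-t-1}$. Once this convention is made explicit, both recurrences are immediate from the chain rule and the induction concludes.
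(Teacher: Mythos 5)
Your proposal is correct and follows essentially the same route as the paper, whose proof of Proposition~\ref{prop:bptt} is simply the observation that the recurrences are a direct application of the chain rule to the unrolled dynamics $s_{t+1} = F(x, s_t, \theta_t = \theta)$. Your write-up merely spells out the bookkeeping (base case, the single outgoing edge from $s_{T-t}$ and $\theta_{T-t}$ to $s_{T-t+1}$, and the index convention) that the paper leaves implicit.
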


\begin{proof}[Proof of Proposition \ref{prop:bptt}]
This is a direct application of the chain rule of differentiation, using the fact that $s_{t+1} = F \left( x,s_t,\theta \right)$
\end{proof}

\begin{cor}
\label{cor:bptt-steady-state}
In our specific setting with static input $x$, suppose that the network has reached the steady state $s_*$ after $T-K$ steps, i.e.
\begin{equation}
\label{eq:bptt-steady-condition}
s_{T-K} = s_{T-K+1} = \cdots = s_{T-1} = s_T = s_*.
\end{equation}
Then the first $K$ gradients of BPTT satisfy the recurrence relationship
\begin{align}
\label{eq:bptt-steady-init}
\nabla^{\rm BPTT}_s(0) & = \frac{\partial \ell}{\partial s} \left( s_*,y \right), \\
\label{eq:bptt-steady-state}
 \forall t=1,2,\ldots,K,    \qquad \nabla^{\rm BPTT}_s(t) & = \frac{\partial F}{\partial s} \left( x,s_*,\theta \right)^\top \cdot \nabla^{\rm BPTT}_s(t-1), \\
\label{eq:bptt-steady-param}
  \forall t=1,2,\ldots,K,    \qquad \nabla^{\rm BPTT}_\theta(t) & = \frac{\partial F}{\partial \theta} \left( x,s_*,\theta \right)^\top \cdot \nabla^{\rm BPTT}_s(t-1).
\end{align}
\end{cor}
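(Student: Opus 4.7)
The corollary is an immediate specialization of Proposition~\ref{prop:bptt} to the regime in which the forward trajectory has already settled. The plan is simply to substitute $s_{T-t} = s_*$, as provided by hypothesis~\eqref{eq:bptt-steady-condition}, into the three recurrences of Proposition~\ref{prop:bptt}, restricted to the index range $t \in \{0, 1, \ldots, K\}$.

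Concretely, I would proceed in three steps. First, I plug $t=0$ into the initial condition of Proposition~\ref{prop:bptt}: since $s_T = s_*$ by hypothesis, $\nabla^{\rm BPTT}_s(0) = \partial \ell/\partial s(s_T, y) = \partial \ell/\partial s(s_*, y)$, which is Eq.~\eqref{eq:bptt-steady-init}. Second, I verify the index bookkeeping for the recurrences: for $t \in \{1, \ldots, K\}$, the index $T-t$ ranges over $\{T-K, T-K+1, \ldots, T-1\}$, which by Eq.~\eqref{eq:bptt-steady-condition} lies in the steady-state window $\{T-K, \ldots, T\}$, so $s_{T-t} = s_*$ throughout. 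Third, substituting $s_{T-t} \leftarrow s_*$ inside the Jacobians $\partial F/\partial s$ and $\partial F/\partial \theta$ appearing in Eq.~\eqref{eq:bptt-state} and Eq.~\eqref{eq:bptt-param} immediately yields Eq.~\eqref{eq:bptt-steady-state} and Eq.~\eqref{eq:bptt-steady-param}, respectively; the recursive right-hand side $\nabla^{\rm BPTT}_s(t-1)$ is not touched by this substitution, since the corresponding equation for $\nabla^{\rm BPTT}_s(t-1)$ is proved at the previous induction step.

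There is no real obstacle here, since the corollary is essentially a relabelling: once the forward pass has plateaued, every Jacobian along the last $K$ backward steps of BPTT is evaluated at the same state $s_*$. The only point requiring a moment's attention is the index range---confirming that $T-t \in \{T-K, \ldots, T-1\}$ lies inside the steady-state window $\{T-K, \ldots, T\}$ for every $t=1,\ldots,K$, which it does because $K \leq T$ and the hypothesis freezes the trajectory on the larger window $\{T-K, \ldots, T\}$. No assumption beyond those of Proposition~\ref{prop:bptt} (i.e.\ differentiability of $F$ and $\ell$) is needed.
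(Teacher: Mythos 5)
Your proof is correct and follows exactly the same route as the paper, whose own proof simply states that the corollary is a straightforward consequence of Proposition~\ref{prop:bptt}. Your substitution of $s_{T-t} = s_*$ into the recurrences of Proposition~\ref{prop:bptt}, together with the check that $T-t \in \{T-K,\ldots,T-1\}$ stays inside the steady-state window for $t=1,\ldots,K$, is precisely the specialization the paper leaves implicit.
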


\begin{proof}[Proof of Corollary \ref{cor:bptt-steady-state}]
    This is a straightforward consequence of Proposition \ref{prop:bptt}.
\end{proof}

\begin{figure}
\begin{center}
   \includegraphics[width=\textwidth]{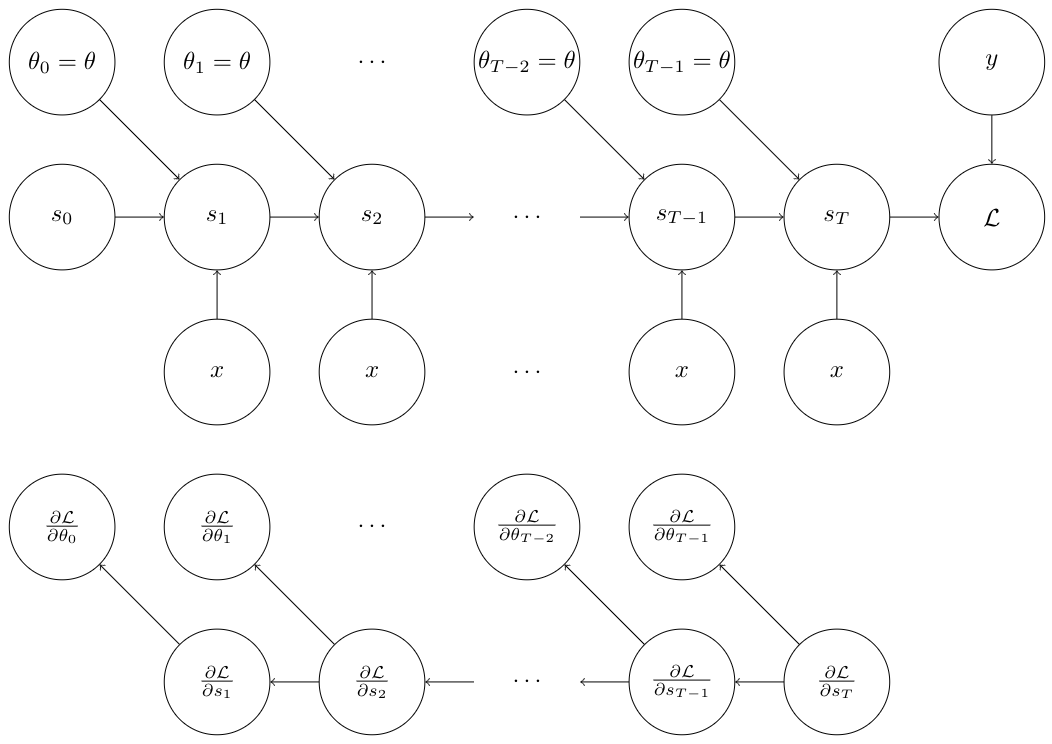}
\end{center}
  \caption{\textbf{Top.} Computational graph of an RNN with fixed input $x$ and target $y$, unfolded in time. As usual for RNNs, the parameters $\theta_0, \theta_1, \ldots, \theta_{T-1}$ at each time step share the same value $\theta$. The terminal state of the network is the steady state, i.e. $s_T = s_*$.
  \textbf{Bottom.} Backpropagation Through Time (BPTT) computes the gradients $\frac{\partial {\mathcal L}}{\partial s_T}, \frac{\partial {\mathcal L}}{\partial s_{T-1}}, \ldots, \frac{\partial {\mathcal L}}{\partial s_1}$ and $\frac{\partial {\mathcal L}}{\partial \theta_{T-1}}, \frac{\partial {\mathcal L}}{\partial \theta_{T-2}}, \ldots, \frac{\partial {\mathcal L}}{\partial \theta_0}$ as intermediary steps in order to compute the total gradient $\frac{\partial {\mathcal L}}{\partial \theta}$ as in Eq.~\ref{eq:total-gradient}.}
  \label{fig:bptt}
\end{figure}


\subsection{Equilibrium Propagation (EP) -- A Formulation with Arbitrary Transition Function $F$}
\label{sec:ep2}

In this section, we show (Lemma \ref{lemma:ep} below) that the neural updates $\Delta^{\rm EP}_s(t)$ and weight updates $\Delta^{\rm EP}_\theta(t)$ of EP satisfy a recurrence relation similar to the one for the gradients of BPTT (Proposition \ref{prop:bptt}, or more specifically Corollary \ref{cor:bptt-steady-state}).

In section \ref{sec:ep} we have presented EP in the setting where the transition function $F$ derives from a scalar function $\Phi$, i.e. with $F$ of the form $F(x,s,\theta) = \frac{\partial \Phi}{\partial s}(x,s,\theta)$.
This hypothesis is necessary to show equality of the updates of EP and the gradients of BPTT (Theorem \ref{thm:main}).
To better emphasize where this hypothesis is used, we first show an intermediary result (Lemma \ref{lemma:ep} below) which holds for arbitrary transition function $F$.

First we formulate EP for arbitrary transition function $F$, inspired by the ideas of \citet{scellier2018generalization}.
Recall that at the beginning of the second phase of EP the state of the network is the steady state $s^\beta_0 = s_*$ characterized by
\begin{equation}
	s_* = F \left( x,s_*,\theta \right),
\end{equation}
and that, given some value $\beta > 0$ of the hyperparameter $\beta$, the successive neural states $s^\beta_1, s^\beta_2, \ldots$ are defined and computed as follows:
\begin{equation}
	\forall t \geq 0, \qquad s_{t+1}^\beta = F \left( x,s_t^\beta,\theta \right) - \beta \; \frac{\partial \ell}{\partial s} \left( s_t^\beta,y \right).
	\label{eq:proof3-1}
\end{equation}
In this more general setting, we redefine the `neural updates' and `weight updates' as follows\footnote{Note the index shift in the definition of $\del{\theta}(\beta,t)$ compared to the definition of Eq.~\ref{eq:delta-ep}. We motivate this index shift in Appendix \ref{sec:index-shift}.}:
\begin{align}
    \label{eq:ep-state-update}
    \forall t \geq 0, \qquad \del{s}(\beta,t) & = \frac{1}{\beta} \left( s_{t+1}^\beta - s_t^\beta \right), \\
    \label{eq:ep-weight-update}
    \forall t \geq 1, \qquad \del{\theta}(\beta,t) & = \frac{1}{\beta} \frac{\partial F}{\partial \theta} \left( x,s_{t-1}^\beta,\theta \right)^\top
    \cdot \left( s_t^\beta - s_{t-1}^\beta \right).
\end{align}
In contrast, recall that in the gradient-based setting of section \ref{sec:ep} we had defined
\begin{equation}
    \label{eq:ep-weight-update-2}
    \del{\theta}(\beta,t) = \frac{1}{\beta} \left( \frac{\partial \Phi}{\partial \theta} \left( x,s_t^\beta,\theta \right) - \frac{\partial \Phi}{\partial \theta} \left( x,s_{t-1}^\beta,\theta \right) \right).
\end{equation}
When $F = \frac{\partial \Phi}{\partial s}$, the definitions of Eq.~\ref{eq:ep-weight-update} and Eq.~\ref{eq:ep-weight-update-2} are slightly different, but what matters is that both definitions coincide in the limit $\beta \to 0$.
Now that we have redefined $\del{s}(\beta,t)$ and $\del{\theta}(\beta,t)$ for general transition function $F$, we are ready to state our intermediary result.

\begin{lma}
	\label{lemma:ep}
    Let $\Delta^{\rm EP}_s(t) = \lim_{\beta \to 0} \Delta^{\rm EP}_s(\beta,t)$ and $\Delta^{\rm EP}_\theta(t) = \lim_{\beta \to 0} \Delta^{\rm EP}_\theta(\beta,t)$ be the neural and weight updates of EP in the limit $\beta \to 0$.
    They satisfy the recurrence relationship
	\begin{align}
	    \label{eq:ep-init}
	    \Delta^{\rm EP}_s(0)                   & = - \frac{\partial \ell}{\partial s}        \left( s_*,y \right), \\
	    \label{eq:ep-state}
	    \forall t \geq 0, \qquad \Delta^{\rm EP}_s(t+1)      & = \frac{\partial F}{\partial s}      \left( x,s_*,\theta \right)   \cdot \Delta^{\rm EP}_s(t), \\
	    \label{eq:ep-weight}
	    \forall t \geq 0, \qquad \Delta^{\rm EP}_\theta(t+1) & = \frac{\partial F}{\partial \theta} \left( x,s_*,\theta \right)^\top \cdot \Delta^{\rm EP}_s(t).
	\end{align}
\end{lma}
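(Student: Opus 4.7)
My strategy is to verify the three identities directly from the definitions (Eqs.~\ref{eq:ep-state-update}--\ref{eq:ep-weight-update}) together with the second-phase recursion (Eq.~\ref{eq:proof3-1}), by Taylor expanding $F$ around the steady state $s_*$ and tracking the order in $\beta$. The whole argument is an induction on $t$, and the lemma holds for \emph{any} transition function $F$ (smoothness is all that is needed).

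\paragraph{Base case.} For $t=0$, I compute directly: since $s_0^\beta = s_*$ and $s_1^\beta = F(x,s_*,\theta) - \beta\,\frac{\partial\ell}{\partial s}(s_*,y)$, the fixed point equation $F(x,s_*,\theta)=s_*$ gives $s_1^\beta - s_0^\beta = -\beta\,\frac{\partial\ell}{\partial s}(s_*,y)$ exactly (no $\beta$-dependence in the coefficient). Dividing by $\beta$ and taking the limit yields Eq.~\ref{eq:ep-init}.

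\paragraph{Key lemma: $s_t^\beta - s_* = O(\beta)$ for every fixed $t$.} Before handling the inductive step, I would prove this auxiliary bound by induction. It is trivial for $t=0$. Assuming $s_t^\beta = s_* + O(\beta)$, a first-order Taylor expansion of $F(x,\cdot,\theta)$ around $s_*$ combined with $F(x,s_*,\theta)=s_*$ gives
\begin{equation*}
 s_{t+1}^\beta - s_* = \tfrac{\partial F}{\partial s}(x,s_*,\theta)\cdot(s_t^\beta - s_*) + O\!\left(\|s_t^\beta-s_*\|^2\right) - \beta\,\tfrac{\partial\ell}{\partial s}(s_t^\beta,y) = O(\beta).
\end{equation*}
This bound is what legitimizes all subsequent Taylor expansions and makes the $\beta\to 0$ limits well defined.

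\paragraph{Inductive step for the neural update.} For $t\geq 1$ I subtract the recursion at two consecutive steps:
\begin{equation*}
 s_{t+1}^\beta - s_t^\beta = F(x,s_t^\beta,\theta) - F(x,s_{t-1}^\beta,\theta) - \beta\bigl[\tfrac{\partial\ell}{\partial s}(s_t^\beta,y) - \tfrac{\partial\ell}{\partial s}(s_{t-1}^\beta,y)\bigr].
\end{equation*}
Taylor expanding $F$ in its second argument around $s_{t-1}^\beta$ gives
\begin{equation*}
 F(x,s_t^\beta,\theta) - F(x,s_{t-1}^\beta,\theta) = \tfrac{\partial F}{\partial s}(x,s_{t-1}^\beta,\theta)\cdot(s_t^\beta - s_{t-1}^\beta) + O\!\left(\|s_t^\beta - s_{t-1}^\beta\|^2\right).
\end{equation*}
The auxiliary lemma implies $s_t^\beta - s_{t-1}^\beta = O(\beta)$, so the quadratic remainder is $O(\beta^2)$, and the bracketed $\ell$-difference is $O(\beta)$, making the last term $O(\beta^2)$ overall. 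Dividing by $\beta$ and sending $\beta\to 0$ (using $s_{t-1}^\beta \to s_*$ by continuity) yields Eq.~\ref{eq:ep-state}.

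\paragraph{Weight update.} For Eq.~\ref{eq:ep-weight} the computation is immediate from the definition (Eq.~\ref{eq:ep-weight-update}): $\tfrac{\partial F}{\partial \theta}(x,s_{t-1}^\beta,\theta) \to \tfrac{\partial F}{\partial \theta}(x,s_*,\theta)$ by continuity of $\partial_\theta F$, while $\tfrac{1}{\beta}(s_t^\beta - s_{t-1}^\beta) \to \Delta^{\rm EP}_s(t-1)$ by the result just established for the neural updates. Multiplying the limits gives the claim.

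\paragraph{Main obstacle.} The only non-trivial ingredient is the auxiliary bound $s_t^\beta - s_* = O(\beta)$. Once that is in place, the inductive step amounts to a first-order Taylor expansion where the remainders are comfortably $O(\beta^2)$ and disappear after dividing by $\beta$. Care is needed to justify interchanging the limit $\beta\to 0$ with the (finite) recursion for fixed $t$; this is routine since each $s_t^\beta$ is a smooth function of $\beta$ near $\beta=0$.
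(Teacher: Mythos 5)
Your proof is correct, and it reaches the lemma by a genuinely different route than the paper. The paper's proof first observes that, since $s_t^\beta \to s_*$ as $\beta \to 0$ for every fixed $t$ (because $s_t^0 = s_*$ for all $t$), the update $\Delta^{\rm EP}_s(t)$ can be rewritten as the difference of derivatives $\left.\frac{\partial s_{t+1}^\beta}{\partial \beta}\right|_{\beta=0} - \left.\frac{\partial s_t^\beta}{\partial \beta}\right|_{\beta=0}$; it then differentiates the second-phase recursion with respect to $\beta$, evaluates at $\beta = 0$ to obtain a linear recursion for these derivatives, and subtracts that recursion at consecutive time steps. All first-order bookkeeping is thereby absorbed into the chain rule, at the implicit cost of assuming smooth dependence of $s_t^\beta$ on $\beta$. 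You instead subtract the recursion at consecutive time steps \emph{first}, Taylor expand $F$ in the state variable, and control the remainders explicitly through your auxiliary bound $s_t^\beta - s_* = O(\beta)$, which is the quantitative counterpart of the paper's observation that $s_t^\beta \to s_*$. Your induction has two merits the paper's argument lacks: it establishes the \emph{existence} of the limits $\Delta^{\rm EP}_s(t)$ and $\Delta^{\rm EP}_\theta(t)$ along the way (the base case is exact, and each inductive step produces a convergent right-hand side), and it only invokes $C^2$-type regularity of $F$ and $\ell$ near $s_*$ rather than differentiability of the map $\beta \mapsto s_t^\beta$. What you give up is brevity: the differentiation-in-$\beta$ argument disposes of all remainder estimates in one line. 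Both proofs treat the weight update identically, by passing to the limit directly in the definition, and both hold for an arbitrary transition function $F$, which is the point of the lemma.
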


Note that the multiplicative matrix in Eq.~\ref{eq:ep-state} is the square matrix $\frac{\partial F}{\partial s}\left( x,s_*,\theta \right)$ whereas the one in Eq.~\ref{eq:bptt-steady-state} is its transpose $\frac{\partial F}{\partial s}\left( x,s_*,\theta \right)^\top$.
Because of that, the updates $\Delta^{\rm EP}_s(t)$ and $\Delta^{\rm EP}_\theta(t)$ of EP on the one hand, and the gradients $\nabla^{\rm BPTT}_s(t)$ and $\nabla^{\rm BPTT}_\theta(t)$ of BPTT on the other hand, satisfy different recurrence relationships in general.
Except when $\frac{\partial F}{\partial s}\left( x,s_*,\theta \right)$ is symmetric, i.e. when $F$ is of the form $F(x,s,\theta) = \frac{\partial \Phi}{\partial s}(x,s,\theta)$.

\begin{proof}[Proof of Lemma \ref{lemma:ep}]
	First of all, in the limit $\beta \to 0$, the weight update $\Delta^{\rm EP}_\theta(\beta,t)$ of Eq.~\ref{eq:ep-weight-update} rewrites
	\begin{equation}
	    \Delta^{\rm EP}_\theta(t) = 
        \frac{\partial F}{\partial \theta} \left( x,s_*,\theta \right)^\top
        \cdot \Delta^{\rm EP}_s(t).
	\end{equation}
	Hence Eq.~\ref{eq:ep-weight}.
	Now we prove Eq.~\ref{eq:ep-init}-\ref{eq:ep-state}.
	Note that the neural update $\Delta^{\rm EP}_s(\beta,t)$ of Eq.~\ref{eq:ep-state-update} rewrites
	\begin{equation}
		\Delta^{\rm EP}_s(t) = \left. \frac{\partial s_{t+1}^\beta}{\partial \beta} \right|_{\beta=0} - \left. \frac{\partial s_t^\beta}{\partial \beta} \right|_{\beta=0}.
	\end{equation}
	This is because for every $t \geq 0$ we have $s_t^\beta \to s_*$ as $\beta \to 0$ : starting from $s_0^0 = s_*$, if you set $\beta=0$ in Eq.~\ref{eq:proof3-1}, then $s_1^0 = s_2^0 = \ldots = s_*$.
	
	Differentiating Eq.~\ref{eq:proof3-1} with respect to $\beta$, we get
	\begin{equation}
		\forall t \geq 0, \qquad \frac{\partial s_{t+1}^\beta}{\partial \beta} = \frac{\partial F}{\partial s} \left( x,s_t^\beta,\theta \right) \cdot \frac{\partial s_t^\beta}{\partial \beta} - \frac{\partial \ell}{\partial s} \left( s_t^\beta,y \right) - \beta \frac{\partial^2 \ell}{\partial s^2} \left( s_t^\beta,y \right) \cdot \frac{\partial s_t^\beta}{\partial \beta}.
	\end{equation}
	Letting $\beta \to 0$, we have $s_t^\beta \to s_*$, so that
	\begin{equation}
		\label{eq:proof3-2}
		\forall t \geq 0, \qquad 
		\left. \frac{\partial s_{t+1}^\beta}{\partial \beta} \right|_{\beta=0} =
			\frac{\partial F}{\partial s} \left( x,s_*,\theta \right) \cdot
			\left. \frac{\partial s_t^\beta}{\partial \beta} \right|_{\beta=0}
			- \frac{\partial \ell}{\partial s} \left( s_*,y \right).
	\end{equation}
	Since at time $t=0$ the initial state of the network $s_0^\beta = s_*$ is independent of $\beta$, we have
	\begin{equation}
		\label{eq:proof3-4}
		\frac{\partial s_0^\beta}{\partial \beta} = 0.
	\end{equation}
	Using Eq.~\ref{eq:proof3-2} for $t=0$ and Eq.~\ref{eq:proof3-4}, we get the initial condition (Eq.~\ref{eq:ep-init})
	\begin{equation}
		\Delta^{\rm EP}_s(0)
		= \left. \frac{\partial s_1^\beta}{\partial \beta} \right|_{\beta=0} - \left. \frac{\partial s_0^\beta}{\partial \beta} \right|_{\beta=0}
		= - \frac{\partial \ell}{\partial s} \left( s_*,y \right).
	\end{equation}
	Moreover, if we take Eq.~\ref{eq:proof3-2} and subtract itself from it at time step $t-1$, we get
	\begin{equation}
		\Delta^{\rm EP}_s(t+1) = \frac{\partial F}{\partial s} \left( x,s_*,\theta \right) \cdot \Delta^{\rm EP}_s(t).
	\end{equation}
	Hence Eq.~\ref{eq:ep-state}.
	Hence the result.
\end{proof}

\subsection{Equivalence of EP and BPTT in the gradient-based Setting}

We recall Theorem \ref{thm:main}:

\theor*

\begin{proof}[Proof of Theorem \ref{thm:main}]
    This is a consequence of Corollary \ref{cor:bptt-steady-state} and Lemma \ref{lemma:ep}, using the fact that the Jacobian matrix of $F$ is the Hessian of $\Phi$, thus is symmetric:
	\begin{equation}
		\frac{\partial F}{\partial s} \left( x,s,\theta \right)^\top = \frac{\partial^2 \Phi}{\partial s^2} \left( x,s,\theta \right) = \frac{\partial F}{\partial s} \left( x,s,\theta \right).
	\end{equation}
\end{proof}

\section{Notations}

In this Appendix we motivate some of the distinctions that we make and the notations that we adopt. This includes:
\begin{enumerate}
    \item the distinction between the loss ${\mathcal L}^* = \ell \left( s_*,y \right)$ and the loss ${\mathcal L} = \ell \left( s_T,y \right)$, which may seem unnecessary since $T$ is chosen such that $s_T = s^*$,
    \item the distinction between the `primitive function' $\Phi(x,s,\theta)$ introduced in this paper and the `energy function' $E(x,s,\theta)$ used in \citet{Scellier+Bengio-frontiers2017,scellier2019equivalence},
    \item the index shift in the definition of the processes $\nab{\theta}(t)$ and $\del{\theta}(t)$.
\end{enumerate}

\subsection{Difference between ${\mathcal L}^*$ and ${\mathcal L}$}
\label{sec:loss-L-L*}

There is a difference between the loss at the steady state ${\mathcal L}^*$ and the loss after $T$ iterations ${\mathcal L}$.
To see why the functions ${\mathcal L}^*$ and ${\mathcal L}$ (as functions of $\theta$) are different, we have to come back to the definitions of $s_*$ and $s_T$.
Recall that
\begin{itemize}
    \item ${\mathcal L}^* = \ell \left( s_*,y \right)$ where $s_*$ is the steady state, i.e. characterized by $s_* = F \left( x,s_*,\theta \right)$,
    \item ${\mathcal L} = \ell \left( s_T,y \right)$ where $s_T$ is the state of the network after $T$ time steps, following the dynamics $s_0=0$ and $s_{t+1} = F \left( x,s_t,\theta \right)$.
\end{itemize}
For the current value of the parameter $\theta$, the hyperparameter $T$ is chosen such that $s_T = s_*$, i.e. such that the network reaches steady state after $T$ time steps. Thus, for this value of $\theta$ we have numerical equality ${\mathcal L}(\theta) = {\mathcal L}^*(\theta)$.
However, two functions that have the same value at a given point are not necessarily equal.
Similarly, two functions that have the same value at a given point don't necessarily have the same gradient at that point.
Here we are in the situation where
\begin{enumerate}
    \item the functions ${\mathcal L}$ and ${\mathcal L}^*$ (as functions of $\theta$) have the same value at the current value of $\theta$, i.e. ${\mathcal L}(\theta) = {\mathcal L}^*(\theta)$ numerically,
    \item the functions ${\mathcal L}$ and ${\mathcal L}^*$ (as functions of $\theta$) are analytically different, i.e. ${\mathcal L} \neq {\mathcal L}^*$.
\end{enumerate}
Since the functions ${\mathcal L}$ and ${\mathcal L}^*$ (as functions of $\theta$) are different, the gradients $\frac{\partial {\mathcal L}^*}{\partial \theta}$ and $\frac{\partial {\mathcal L}}{\partial \theta}$ are also different in general.

\subsection{Difference between the Primitive Function $\Phi$ and the Energy Function $E$}
\label{sec:primitive-energy}

Previous work on EP \citep{Scellier+Bengio-frontiers2017,scellier2019equivalence} has studied real-time dynamics of the form:
\begin{equation}
\label{eq:real-time-dynamics}
\frac{ds_t}{dt} = - \frac{\partial E}{\partial s} \left( x,s_t,\theta \right).
\end{equation}
In contrast, in this paper we study discrete-time dynamics of the form
\begin{equation}
\label{eq:discrete-time-dynamics}
s_{t+1} = \frac{\partial \Phi}{\partial s} \left( x,s_t,\theta \right).
\end{equation}
Why did we change the sign convention in the dynamics and why do we call $\Phi$ a `primitive function' rather than an `energy function'?
While it is useful to think of the primitive function $\Phi$ in the discrete-time setting as an equivalent of the energy function $E$ in the real-time setting, there is an important difference between $E$ and $\Phi$.
We argue next that, rather than an energy function, $\Phi$ is much better thought of as a primitive of the transition function $F$.
First we show how the two settings are related.

\paragraph{Casting real-time dynamics to discrete-time dynamics.}
The real-time dynamics of Eq.~(\ref{eq:real-time-dynamics}) can be cast to the discrete-time setting of Eq.~(\ref{eq:discrete-time-dynamics}) as follows.
The Euler scheme of Eq.~(\ref{eq:real-time-dynamics}) with discretization step $\epsilon$ reads: 
\begin{equation}
s_{t+1} = s_t - \epsilon \frac{\partial E}{\partial s} \left(x,s_t,\theta\right).
\end{equation}
This equation rewrites
\begin{equation}
s_{t+1} = \frac{\partial \Phi_\epsilon}{\partial s} \left( x,s_t,\theta \right),
\qquad \text{where} \qquad
\Phi_\epsilon(x,s,\theta) = \frac{1}{2} \| s \|^2 - \epsilon \; E(x,s,\theta).
\end{equation}
However, although the real-time dynamics can be mapped to the discrete-time setting, the discrete-time setting is more general.
The primitive function $\Phi$ cannot be interpreted in terms of an energy in general, as we argue next.

\paragraph{Why not keep the notation E and the name of `energy function' in the discrete-time framework?}
In the real-time setting, $s_t$ \textit{follows} the gradient of $E$, so that $E \left( s_t \right)$ decreases as time progresses until $s_t$ settles to a (local) minimum of $E$.
This property motivates the name of `energy function' for $E$ by analogy with physical systems whose dynamics settle down to low-energy configurations.
In contrast, in the discrete-time setting, $s_t$ \textit{is mapped} onto the gradient of $\Phi$ (at the point $s_t$).
In general, there is no guarantee that the discrete-time dynamics of Eq.~(\ref{eq:discrete-time-dynamics}) optimizes $\Phi$ and there is no guarantee that the dynamics of $s_t$ converges to an optimum of $\Phi$.
For this reason, there is no reason to call $\Phi$ an `energy function', since the intuition of optimizing an energy does not hold.

\paragraph{Why call $\Phi$ a `primitive function'?}

The name of `primitive function' for $\Phi$ is motivated by the fact that $\Phi$ is a primitive of the transition function $F$, whose property better captures the assumptions under which the theory of EP holds.
To see this, we first rewrite Eq.~(\ref{eq:discrete-time-dynamics}) in the form
\begin{equation}
    s_{t+1} = F \left( x,s_t,\theta \right),
\end{equation}
where $F$ is a transition function (in the state space) of the form
\begin{equation}
    \label{eq:gradient-field}
    F(x,s,\theta) = \frac{\partial \Phi}{\partial s} \left( x,s,\theta \right),
\end{equation}
with $\Phi(x,s,\theta)$ a scalar function.
For the theory of EP to hold (in particular Theorem \ref{thm:main}), the following two conditions must be satisfied (see Corollary \ref{cor:bptt-steady-state} and Lemma \ref{lemma:ep}):
\begin{enumerate}
    \item The steady state $s_*$ (at the end of the first phase and at the beginning of the second phase) must satisfy the condition
    \begin{equation}
    	s_* = F \left( x,s_*,\theta \right),
    \end{equation}
    \item the Jacobian of the transition function $F$ must be symmetric, i.e.
    \begin{equation}
        \label{eq:symmetric-vector-field}
    	\frac{\partial F}{\partial s}(x,s,\theta)^\top = \frac{\partial F}{\partial s}(x,s,\theta).
    \end{equation}
\end{enumerate}
The condition of Eq.~(\ref{eq:symmetric-vector-field}) is equivalent to the existence of a scalar function $\Phi(x,s,\theta)$ such that Eq.~(\ref{eq:gradient-field}) holds.
Going from Eq.~(\ref{eq:gradient-field}) to Eq.~(\ref{eq:symmetric-vector-field}) is straightforward: in this case the Jacobian of $F$ is the Hessian of $\Phi$, which is symmetric. Indeed $\frac{\partial F}{\partial s}(x,s,\theta) = \frac{\partial^2 \Phi}{\partial s^2}(x,s,\theta) = \frac{\partial F}{\partial s}(x,s,\theta)^\top$.
Going from Eq.~(\ref{eq:symmetric-vector-field}) to Eq.~(\ref{eq:gradient-field}) is also true -- though less obvious -- and is a consequence of Green's theorem.
\footnote{Another equivalent formulation is that the curl of $F$ is null, i.e. $\vec{\rm rot} \; \vec{F} = \vec{0}$.}
We say that $F$ derives from the scalar function $\Phi$, or that $\Phi$ is a primitive of $F$.
Hence the name of `primitive function' for $\Phi$.

\paragraph{Assumption of Convergence in the Discrete-Time Setting.}
As we said earlier, in the real-time setting the gradient dynamics of Eq.~\ref{eq:real-time-dynamics} guarantees convergence to a (local) minimum of $E$.
In contrast, in the discrete-time setting, no intrinsic property of $F$ or $\Phi$ a priori guarantees that the dynamics of Eq~\ref{eq:discrete-time-dynamics} settles to steady state.
This discussion is out of the scope of this work and we refer to \citet{scarselli2009graph} where sufficient (but not necessary) conditions are discussed to ensure convergence based on the contraction map theorem.

\subsection{Index Shift in the Definition of $\nab{\theta}(t)$ and $\del{\theta}(t)$}
\label{sec:index-shift}

The convention that we have chosen to define $\nab{\theta}(t)$ and $\del{\theta}(t)$ in Appendix \ref{sec:bptt-ep} may seem strange at first glance for two reasons:
\begin{itemize}
    \item the state update $\del{s}(t)$ is defined in terms of $s_t^\beta$ and $s_{t+1}^\beta$, whereas the weight update $\del{\theta}(t)$ is defined in terms of $s_{t-1}^\beta$ and $s_t^\beta$,
    \item at time $t=0$, the state gradient $\nab{s}(0)$ and the state update $\del{s}(0)$ are defined, but the weight gradient $\nab{\theta}(0)$ and the weight update $\del{\theta}(0)$ are not defined.
\end{itemize}
Here we explain why our choice of notations is in fact natural.
First, recall from Appendix \ref{sec:bptt} that we have defined the gradients of BPTT as
\begin{align}
    \forall t = 0,1,\ldots,T, \qquad \nab{s}(t) & = \frac{\partial {\mathcal L}}{\partial s_{T-t}}, \\
    \label{eq:aux2}
    \forall t = 1,2,\ldots,T, \qquad \nab{\theta}(t) & =
    \frac{\partial {\mathcal L}}{\partial \theta_{T-t}},
\end{align}
where
\begin{equation}
    \label{eq:aux1}
    \forall t = 0, 1, \ldots T-1, \qquad s_{t+1} = F \left( x,s_t,\theta_t=\theta \right), \qquad {\mathcal L} = \ell \left( s_T, y \right),
\end{equation}
and from Appendix \ref{sec:ep2} that we have defined the neural and weight updates of EP as
\begin{align}
    \forall t \geq 0, \qquad \del{s}(t) & = \lim_{\beta \to 0} \frac{1}{\beta} \left( s_{t+1}^\beta - s_t^\beta \right), \\
    \forall t \geq 1, \qquad \del{\theta}(t) & =
    \lim_{\beta \to 0} \frac{1}{\beta} \left( \frac{\partial \Phi}{\partial \theta} \left( x,s_t^\beta,\theta \right) - \frac{\partial \Phi}{\partial \theta} \left( x,s_{t-1}^\beta,\theta \right) \right),
\end{align}
where
\begin{equation}
	s_0^\beta = s_*, \qquad \forall t \geq 0, \quad s_{t+1}^\beta = F \left( x,s_t^\beta,\theta \right) - \beta \; \frac{\partial \ell}{\partial s} \left( s_t^\beta,y \right).
\end{equation}

\subsubsection{Index Shift}

Let us introduce
\begin{equation}
    \Phi^\beta(x,s,y,\theta) = \Phi(x,s,\theta) - \beta \; \ell(s,y),
\end{equation}
so that the dynamics in the second phase rewrites
\begin{equation}
    s_{t+1}^\beta = \frac{\partial \Phi^\beta}{\partial s} \left( x,s_t^\beta,y,\theta \right).
\end{equation}
It is then readily seen that the neural updates $\Delta_s^{\rm EP}$ and the weight updates $\Delta_\theta^{\rm EP}$ both rewrite in the form
\begin{align}
    \del{s}(0) & = \lim_{\beta \to 0} \frac{1}{\beta} \left( \frac{\partial \Phi^\beta}{\partial s} \left( x,s_0^\beta,y,\theta \right) - \frac{\partial \Phi}{\partial s} \left( x,s_0^\beta,\theta \right) \right), \label{eq:t=0} \\
    \forall t \geq 1, \qquad \del{s}(t) & = \lim_{\beta \to 0} \frac{1}{\beta} \left( \frac{\partial \Phi^\beta}{\partial s} \left( x,s_t^\beta,y,\theta \right) - \frac{\partial \Phi^\beta}{\partial s} \left( x,s_{t-1}^\beta,y,\theta \right) \right), \\
    \forall t \geq 1, \qquad \del{\theta}(t) & =
    \lim_{\beta \to 0} \frac{1}{\beta} \left( \frac{\partial \Phi^\beta}{\partial \theta} \left( x,s_t^\beta,y,\theta \right) - \frac{\partial \Phi^\beta}{\partial \theta} \left( x,s_{t-1}^\beta,y,\theta \right) \right).
\end{align}
Written in this form, we see a symmetry between $\del{s}(t)$ and $\del{\theta}(t)$ and there is no more index shift.

\subsubsection{Missing Weight Gradient $\nab{\theta}(0)$ and Weight Update $\del{\theta}(0)$}

We can naturally extend the definition of $\nab{\theta}(0)$ and $\del{\theta}(0)$ following Eq.~\ref{eq:aux2}.
In the setting studied in this paper, they both take the value $0$ because the cost function $\ell(s,y)$ does not depend on the parameter $\theta$.
But suppose now that $\ell$ depends on $\theta$, i.e. that $\ell$ is of the form $\ell(s,y,\theta)$.
Then the loss of Eq.~\ref{eq:aux1} takes the form ${\mathcal L} = \ell \left( s_T,y,\theta_T=\theta \right)$, so that:
\begin{equation}
    \nab{\theta}(0) = \frac{\partial {\mathcal L}}{\partial \theta_T} = \frac{\partial \ell}{\partial \theta} \left( s_T,y,\theta \right).
\end{equation}

As for the missing weight update $\del{\theta}(0)$, we follow the definition of Eq.~\ref{eq:t=0} and define:
\begin{equation}
    \del{\theta}(0) = \lim_{\beta \to 0} \frac{1}{\beta} \left( \frac{\partial \Phi^\beta}{\partial \theta} \left( x,s_0^\beta,y,\theta \right) - \frac{\partial \Phi}{\partial \theta} \left( x,s_0^\beta,\theta \right) \right) = - \frac{\partial \ell}{\partial \theta} \left( s_*,y,\theta \right).
\end{equation}
Since $s_T = s_*$ (the state at the end of the first phase is the state at the beginning of the second phase, and it is the steady state), we have $\del{\theta}(0) = -\nab{\theta}(0)$.

\newpage
\section{Experiments: demonstrating the GDU property}
\label{exp}

\subsection{Details on section \ref{sec:eb-prototypical-settings}}
\label{exp:toymodel}

\begin{figure}[ht!]
\begin{center}
   \fbox{\includegraphics[scale=0.25]{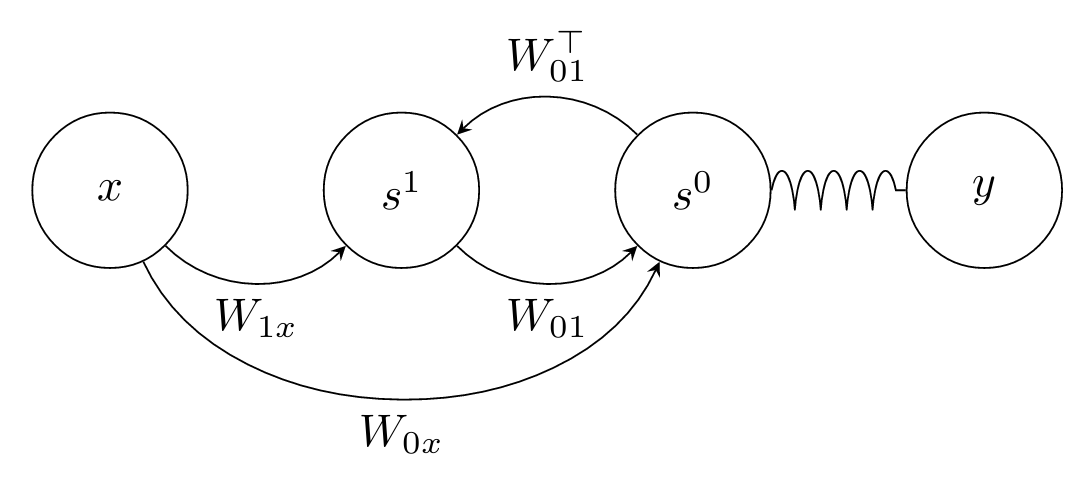}}
\end{center}
  \caption{Toy model architecture}
  \label{toymodelarchi}
\end{figure}
\paragraph{Equations.} The toy model is an architecture where input, hidden and output neurons are connected altogether, without lateral connections. Denoting input neurons as $x$, hidden neurons as $s_1$ and output neurons as $s_0$, the primitive function for this model reads:

\begin{align*}
\Phi \left( x, s^0, s^1 \right) &=  (1 - \epsilon)\frac{1}{2} \left( ||s^0||^{2} + ||s^1||^{2} \right) \\
+ \epsilon & \left(\sigma(s^0)\cdot W_{01}\cdot\sigma(s^1)  + \sigma(s^0)\cdot W_{0x}\cdot\sigma(x) + \sigma(s^1)\cdot W_{1x}\cdot\sigma(x) \right),
\end{align*}
where $\epsilon$ is a discretization parameter.
Furthermore the cost function $\ell$ is
\begin{equation}
    \ell \left( s^0,y \right) = \frac{1}{2} \| s^0 - y \|^2.
\end{equation}

As a reminder, we define the following convention for the dynamics of the second phase: $\forall t \in [0, K]: s^{n, \beta}_{t} = s^{n}_{t +T}$ where $T$ is the length of the first phase. The equations of motion read in the first phase read
\begin{align*}
\forall t \in [0, T]: 
\left\{
\begin{array}{ll}
    s^0_{t + 1} &= (1 - \epsilon)s^0_{t} + \epsilon\sigma'(s^0_{t}))\odot(W_{01}\cdot\sigma(s^1_{t}) + W_{0x}\cdot\sigma(x))\\
    s^1_{t + 1} &= (1 - \epsilon)s^1_{t} + \epsilon\sigma'(s^1_{t})\odot(W_{01}^\top\cdot \sigma(s^0_{t}) + W_{1x}\cdot\sigma(x)),
    \end{array}
\right.
\end{align*}
In the second phase
\begin{align}
\forall t \in [0, K]: 
\left\{
\begin{array}{ll}
    s^{0, \beta}_{t+1} &= (1 - \epsilon)s^{0, \beta}_{t} + \epsilon\sigma'(s^{0, \beta}_{t})\odot(W_{01}\cdot\sigma(s^{1, \beta}_{t}) + W_{0x}\cdot\sigma(x)) \\
        &+ \epsilon\beta(y - s^{0, \beta}_{t})\\
    s^{1, \beta}_{t + 1} &= (1 - \epsilon)s^{1, \beta}_{t} + \epsilon\sigma'(s^{1, \beta}_{t})\odot(W_{01}^\top\cdot \sigma(s^{0, \beta}_{t}) + W_{1x}\cdot\sigma(x)),
    \end{array}
\right.
\label{eq-toymodel-ep-cont}
\end{align}

where $y$ denotes the target. In this case and according to the definition Eq.~(\ref{eq:delta-ep}), the EP error processes for the parameters $\theta = \{W_{01}, W_{0x}, W_{1x}\}$ read:

\begin{equation*}
\forall t \in [0, K]:
    \left\{
\begin{array}{ll}
\Delta_{W_{01}}^{\rm EP}(t) &= \frac{1}{\beta}\left(\sigma(s^{0, \beta}_{t+1})\cdot\sigma(s^{1, \beta}_{t+1})^\top- \sigma(s^{0, \beta}_{t})\cdot\sigma(s^{1, \beta}_{t})^\top\right)\\
\Delta_{W_{0x}}^{\rm EP}(t) &= \frac{1}{\beta}\left(\sigma(s^{0, \beta}_{t+1})\cdot\sigma(x)^\top- \sigma(s^{0, \beta}_{t})\cdot\sigma(x)^\top\right)\\
\Delta_{W_{1x}}^{\rm EP}(t) &= \frac{1}{\beta}\left(\sigma(s^{1, \beta}_{t+1})\cdot\sigma(x)^\top- \sigma(s^{1, \beta}_{t})\cdot\sigma(x)^\top\right),
    \end{array}
\right.
\end{equation*}

\paragraph{Experiment: theorem demonstration on dummy data.} We took 5 output neurons, 50 hidden neurons and 10 visible neurons, using $\sigma(x) = \tanh(x)$. The experiment consists of the following: we define a dummy uniformly distributed random input $x \sim U[0, 1]$ (of size $1\times 10$) and a dummy random one-hot encoded target (of size $1\times 5$). We take $\epsilon = 0.08$ and perform the first phase for $T = 5000$ steps. Then, we perform on the one hand BPTT over $K= 80$ steps (to compute the gradients $\nabla^{\rm BPTT}$), on the other hand EP over $K = 80$ steps with $\beta = 0.01$ (to compute the neural updates $\Delta^{\rm EP}$) and compare the gradients and neural updates provided by the two algorithms. The resulting curves can be found in the main text (Fig.~\ref{fig:toy-model}).

\subsection{Details on subsection \ref{sec:depth}}

\subsubsection{Definition of the fully connected layered model in the energy-based setting}
\label{exp:def-fc-eb}
\begin{figure}[ht!]
\begin{center}
   \fbox{\includegraphics[scale=0.25]{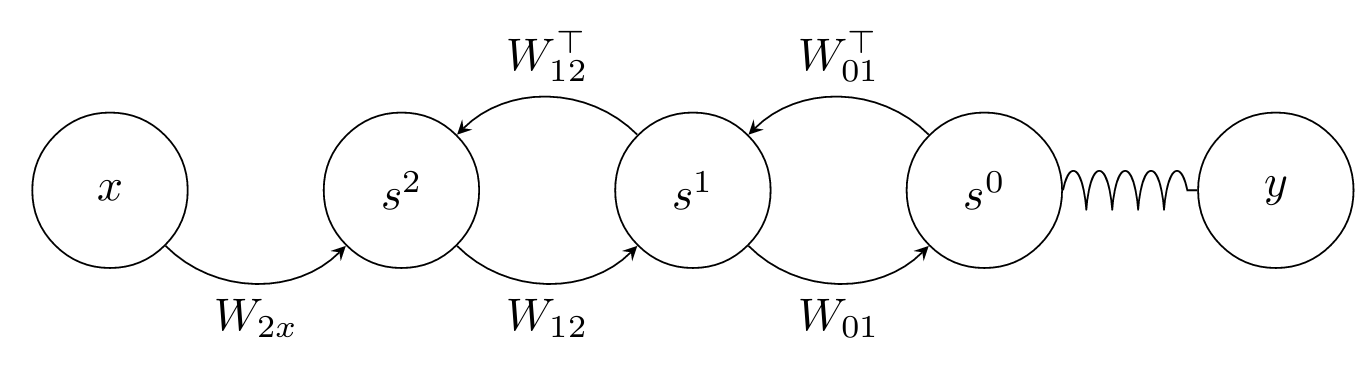}}
\end{center}
  \caption{Layered architecture}
\end{figure}
\paragraph{Equations.} The fully connected layered model is an architecture where the neurons are only connected between two consecutive layers. We denote neurons of the n-th layer as $s^{n}$ with $n\in [0, N - 1]$. Layers are labelled in a backward fashion: $n=0$ labels the output layer, $n=1$ the first hidden starting from the output layer, and $n = N - 1$ the visible layer so that there are $N-2$ hidden layers. As a reminder, we define the following convention for the dynamics of the second phase: $\forall t \in [0, K]: s^{n, \beta}_{t} = s^{n}_{t +T}$ where $T$ is the length of the first phase. The primitive function of this model is defined as:

\begin{equation}
\Phi \left( x, s^0, s^1, \ldots, s^N \right) =  \frac{1}{2}(1 - \epsilon) \left( \sum_{n = 1}^{N}||s^n||^{2} \right) + \epsilon \sum_{n = 0}^{N - 1} \sigma(s^n)\cdot W_{n n+1}\cdot\sigma(s^{n+1})
\end{equation}

so that the equations of motion read:

\begin{equation*}
\forall t \in [0, T] \\:
\left\{
\begin{array}{ll}
    s^0_{t + 1} &= (1 - \epsilon)s^0_{t} + \epsilon\sigma'(s^0_{t}))\odot W_{0 1}\cdot\sigma(s^{1}_{t})\\
    s^n_{t + 1} &= (1 - \epsilon)s^n_{t} + \epsilon\sigma'(s^n_{t}))\odot(W_{n n+1}\cdot\sigma(s^{n+1}_{t}) + W_{n -1 n}^\top\cdot\sigma(s^{n-1}_{t})) \qquad \forall n \in [1, N - 2]
\end{array}
\right.
\end{equation*}

\begin{equation}
\forall t \in [0, K]:
\left\{
\begin{array}{ll}
    s^{0, \beta}_{t + 1} &= (1 - \epsilon)s^{0, \beta}_{t} + \epsilon\sigma'(s^{0, \beta}_{t}))\odot W_{0 1}\cdot\sigma(s^{1, \beta}_{t}) + \beta\epsilon(y - s^{0, \beta}(t))\\
    s^{n, \beta}_{t + 1} &= (1 - \epsilon)s^{n, \beta}_{t} + \epsilon\sigma'(s^{n, \beta}_{t}))\odot(W_{n n+1}\cdot\sigma(s^{n+1, \beta}_{t}) + W_{n -1 n}^\top\cdot\sigma(s^{n - 1, \beta}_{t})) \\
    & \forall n \in [1, N - 2]
\end{array}
\right.
\label{eq-mnist-ep-cont}
\end{equation}

In this case and according to the definition Eq.~\ref{eq:delta-ep}, the EP error processes for the parameters $\theta = \{W_{n n+1}\}$ read:

\begin{equation*}
\forall t \in [0, K], \qquad \forall n \in [0, N-2]: \Delta_{W_{n n+1}}^{\rm EP}(t) = \frac{1}{\beta}\left(\sigma(s_{t+1}^{n, \beta})\cdot \sigma(s_{t+1}^{{n + 1, \beta}})^\top - \sigma(s_{t}^{n, \beta})\cdot \sigma(s_{t}^{{n + 1, \beta}})^\top  \right)
\end{equation*}

\paragraph{Experiment: theorem demonstration on MNIST.} For this experiment, we consider architectures of the kind 784-512-\dots-512-10 where we have 784 input neurons, 10 ouput neurons, and each hidden layer has 512 neurons, using $\sigma(x) = \tanh(x)$. The experiment consists of the following: we take a random MNIST sample (of size $1\times 784$) and the associated target (of size $1\times 10$). For a given $\epsilon$, we perform the first phase for $T = 2000$ steps. Then, we perform on the one hand BPTT over $K=$ steps (to compute the gradients $\nabla^{\rm BPTT}$), on the other hand EP over $K $ steps with a given $\beta$ (to compute the neural updates $\Delta^{\rm EP}$) and compare the gradients and neural updates provided by the two algorithms. Precise values of the hyperparameters $\epsilon$, T, K, beta are given in Tab.~\ref{table-hyp-th}.

\begin{figure}[H]
\begin{center}
   \includegraphics[scale=0.52]{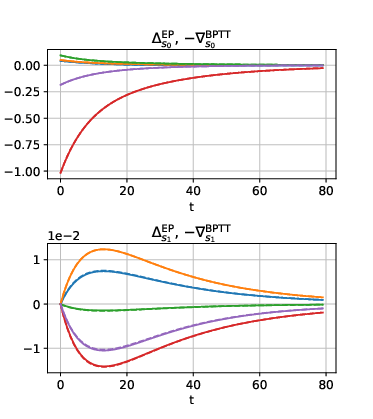}
      \hfill 
   \includegraphics[scale=0.52]{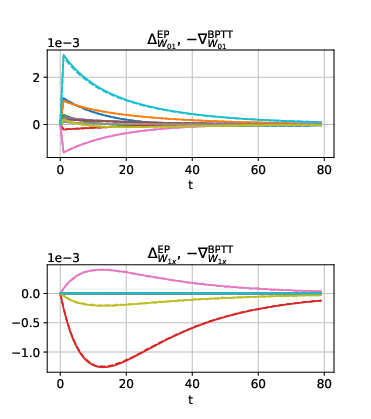}
\end{center}
\caption{Demonstrating the GDU property in the energy-based setting (as predicted by Theorem~\ref{thm:main}) with the fully connected layered architecture with one hidden layer on MNIST.}
\end{figure}

\begin{figure}[H]
\begin{center}
   \includegraphics[scale=0.52]{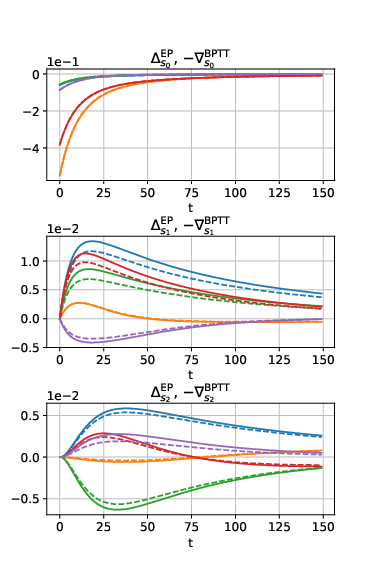}
      \hfill 
   \includegraphics[scale=0.52]{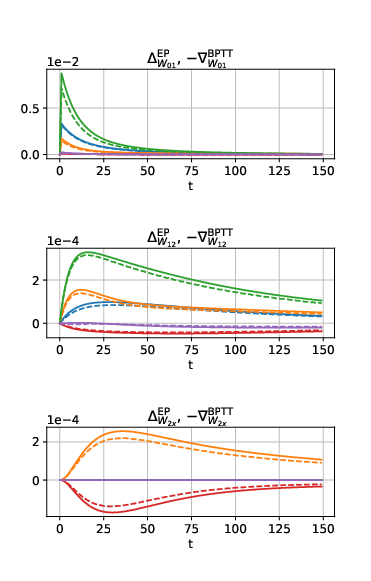}
\end{center}
\caption{Demonstrating the GDU property in the energy-based setting (as predicted by Theorem~\ref{thm:main}) with the fully connected layered architecture with two hidden layers on MNIST.}
\end{figure}

\begin{figure}[H]
\begin{center}
   \includegraphics[scale=0.52]{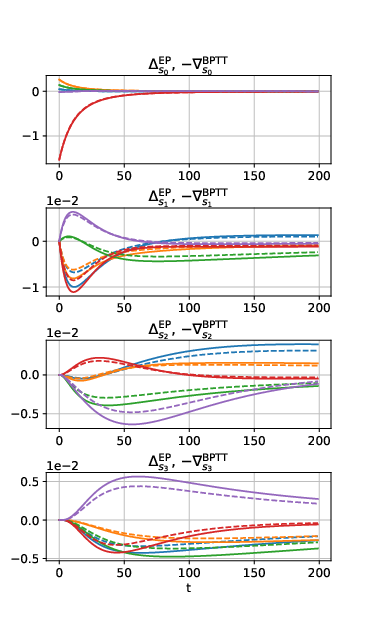}
      \hfill 
   \includegraphics[scale=0.52]{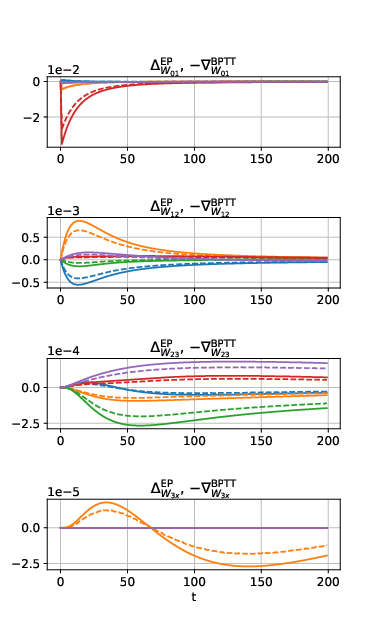}
\end{center}
\caption{Demonstrating the GDU property in the energy-based setting (as predicted by Theorem~\ref{thm:main}) with the fully connected layered architecture with three hidden layers on MNIST.}
\end{figure}

\newpage
\subsubsection{Fully connected layered architecture in the prototypical setting}
\label{exp:def-fc-p}

\paragraph{Equations.} The dynamics of the fully connected layered model are defined by the following set of equations:

\begin{equation*}
\forall t \in [0, T]:
\left\{
\begin{array}{ll}
    s^0_{t + 1} &= \sigma(W_{01}\cdot s^{1}_{t})\\
    s^n_{t + 1} &= \sigma(W_{n n+1}\cdot s^{n+1}_{t} + W_{n -1 n}^\top\cdot s^{n-1}_{t}) \qquad \forall n \in [1, N - 2]
\end{array}
\right.
\end{equation*}

\begin{equation*}
\forall t \in [0, K]:
\left\{
\begin{array}{ll}
    s^{0, \beta}_{t + 1} &= \sigma(W_{0 1}\cdot s^{1, \beta}_{t}) + \beta(y - s^{0, \beta}(t))\\
    s^{n, \beta}_{t + 1} &= \sigma(W_{n n+1}\cdot s^{n+1, \beta}_{t} + W_{n -1 n}^\top\cdot s^{n-1, \beta}_{t}) \qquad \forall n \in [1, N - 2],
\end{array}
\right.
\end{equation*}

where y denotes the target. Considering the function:

\begin{equation}
\Phi \left( x, s^0, s^1, \ldots, s^N \right) = \sum_{n = 0}^{N - 1} s^n\cdot W_{n n+1}\cdot s^{n+1}, 
\end{equation}

and ignoring the activation function, we have:

\begin{equation}
s_{t}^n \approx \frac{\partial \Phi}{\partial s^n}(x, s_{t-1}^0, \cdots, s_{t-1}^
{N-1})    
\end{equation}

so that in this case, we define the EP error processes for the parameters $\theta = \{W_{n n+1}\}$ as:

\begin{equation*}
\forall t \in [0, K], \qquad \forall n \in [0, N-2]: \Delta_{W_{n n+1}}^{EP}(t) =
\frac{1}{\beta}\left(s_{t+1}^{n, \beta}\cdot s_{t+1}^{{n+1, \beta}^\top} - s_{t}^{n, \beta}\cdot s_{T+t}^{{n+1, \beta}^\top}  \right)
\end{equation*}

\paragraph{Experiment: theorem demonstration on MNIST.} The experimental protocol is the exact same as the one used on the fully connected layered architecture in the energy-based setting, using the same activation function $\sigma(x)=\tanh(x)$. Precise values of the hyperparameters $\epsilon$, $T$, $K$, beta are given in Tab.~\ref{table-hyp-th}.

\begin{figure}[H]
\begin{center}
   \includegraphics[scale=0.52]{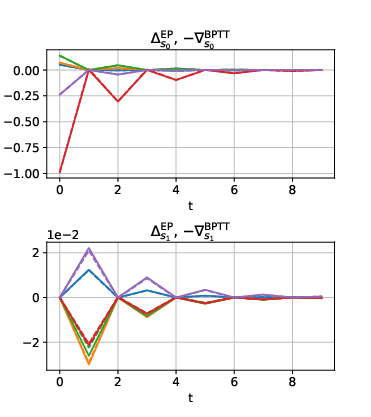}
      \hfill 
   \includegraphics[scale=0.52]{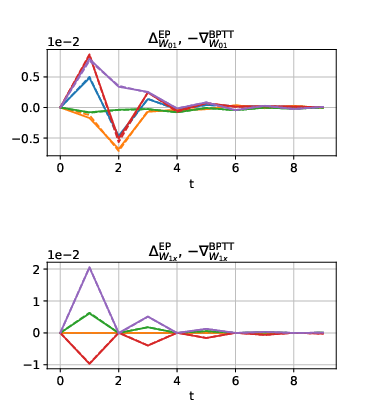}
\end{center}
\caption{Demonstrating the GDU property in the prototypical setting (as predicted by Theorem~\ref{thm:main}) with the fully connected layered architecture with one hidden layer on MNIST.}
\label{disc_1h_s}
\end{figure}

\begin{figure}[H]
\begin{center}
   \includegraphics[scale=0.52]{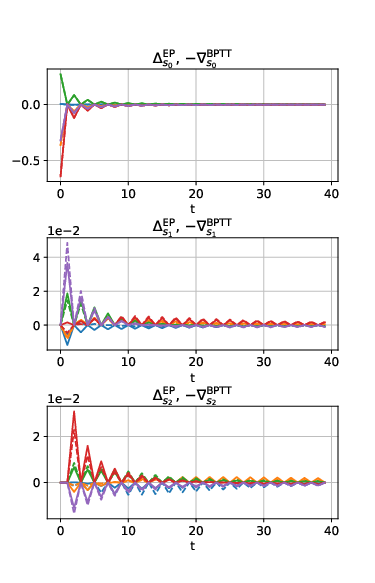}
      \hfill 
   \includegraphics[scale=0.52]{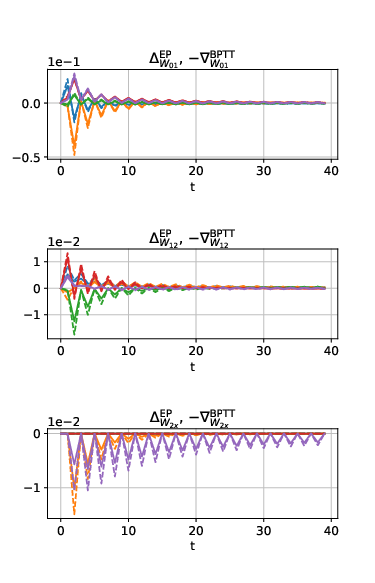}
\end{center}
\caption{Demonstrating the GDU property in the prototypical setting (as predicted by Theorem~\ref{thm:main}) with the fully connected layered architecture with two hidden layers on MNIST.}
\label{disc_2h_s}
\end{figure}

\begin{figure}[H]
\begin{center}
   \includegraphics[scale=0.5]{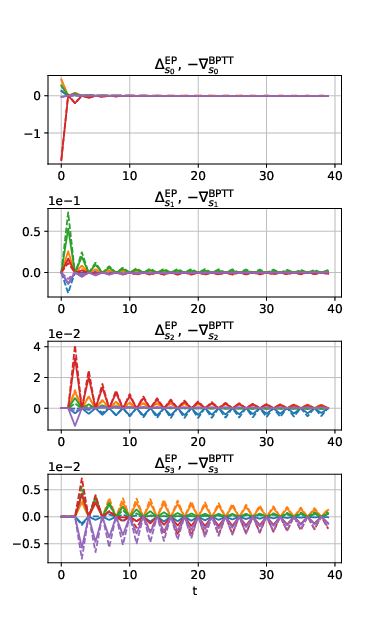}
      \hfill 
   \includegraphics[scale=0.5]{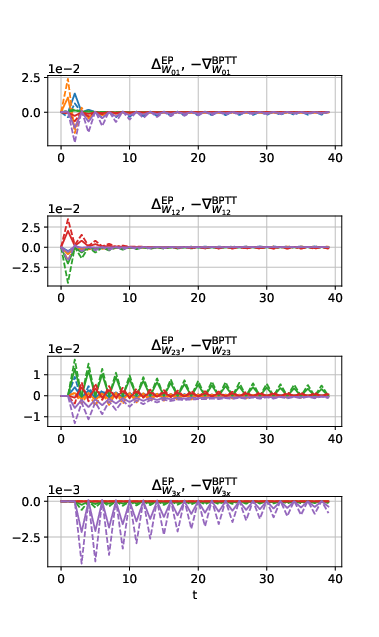}
\end{center}
\caption{Demonstrating the GDU property in the prototypical setting (as predicted by Theorem~\ref{thm:main}) with the fully connected layered architecture with three hidden layers on MNIST.}
\label{disc_3h_s}
\end{figure}

\subsubsection{Definition of the Relative Mean Square Error (RMSE)}
\label{exp:rmse}

We introduce a relative mean squared error (RMSE) between two continuous functions $f$ and $g$ in a given layer L as:

\begin{equation}
    {\rm RMSE(f,g)} = \left \langle\frac{\norm{f - g}_{2, K}}{{\rm max}(\norm{f}_{2, K}, \norm{g}_{2, K})}\right \rangle_{L}, 
\end{equation}

where $\norm{f}_{2, K}=\sqrt{\frac{1}{K}\int_{0}^{K}f^{2}(t)dt}$ and $\langle \cdot \rangle_{L}$ denotes an average over all the elements of layer L. For example, ${\rm RMSE}(\del{W_{01}}, -\nab{W_{01}})$ averages the squared distance between $\del{W_{01}}$ and $-\nab{W_{01}}$ averaged over all the elements of $W_{01}$. Also, instead of computing $\del{}$ and $\nab{}$ processes on a single sample presentation and bias the RMSE by the choice of this sample, $\del{}$ and $\nab{}$ processes have been averaged over a mini-batch of 20 samples before their distance in terms of RMSE was measured.

\subsubsection{Why are the $\nab{s}$ and $\del{s}$ processes saw teeth shaped in the prototypical setting ?}
\label{subsec-sawteeth}

In the prototypical setting, in the case of a layered architecture (without lateral and skip-layer connections),
the $\nab{}$ and $\del{}$ processes are saw teeth shaped, i.e. they take the value zero every other time step
(as seen per Fig.~\ref{th-conv}, Fig.~\ref{disc_1h_s}, Fig.~\ref{disc_2h_s} and Fig.~\ref{disc_3h_s}).
We provide an explanation for this phenomenon both from the point of view of BPTT and from the point of view of EP.
Fig.~\ref{saw_teeth} illustrates this phenomenon in the case of a network with two layers: one output layer $s^0$ and one hidden layer $s^1$.

\begin{itemize}
    \item {\bfseries Point of view of BPTT.} In the forward-time pass (first phase), $s_{t+1}^0$ is determined by $s_t^1$, while $s_{t+1}^1$ is determined by $s_t^0$.
    This gives rise to a zig-zag shaped connectivity pattern in the computational graph of the the network unrolled in time (Fig.~\ref{saw_teeth}).
    In particular, the gray nodes of Fig.~\ref{saw_teeth} are not involved in the computation of the loss ${\mathcal L}$, i.e. their gradients are equal to zero.
    In other words $\nab{s^1}(0) = 0$, $\nab{s^0}(1) = 0$, $\nab{s^1}(2) = 0$, etc.
    \item {\bfseries Point of view of EP.} At the beginning of the second phase (at time step $t=0$), the network is at the steady state ; in particular $s_0^{1,\beta} = s_*^1$.
    At time step $t=1$, only the output layer $s^0$ is influenced by $y$ ; the hidden layer $s^1$ is still at the steady state, i.e. $s_1^{1,\beta} = s_*^1$. From $s_0^{1,\beta} = s_1^{1,\beta}$, it follows that $s_1^{0,\beta} = s_2^{0,\beta}$. In turn, from $s_1^{0,\beta} = s_2^{0,\beta}$ it follows that $s_2^{1,\beta} = s_3^{1,\beta}$. Etc. In other words $\Delta_{s^1}^{\rm EP}(0) = 0$, $\Delta_{s^0}^{\rm EP}(1) = 0$, $\Delta_{s^1}^{\rm EP}(2) = 0$, etc.
\end{itemize}

The above argument can be generalized to an arbitrary number of layers.
In this case we group the layers of even index (resp. odd index) together.
We call $e_t = \left( s^0_t, s^2_t, s^4_t, \ldots \right)$ and $o_t = \left( s^1_t, s^3,t, s^5_t, \ldots \right)$.
The crucial property is that $o_{t+1}$ (resp. $e_{t+1}$) is determined by $e_t$ (resp. $o_t$).

\begin{figure}[ht!]
\begin{center}
   \fbox{\includegraphics[width=\textwidth]{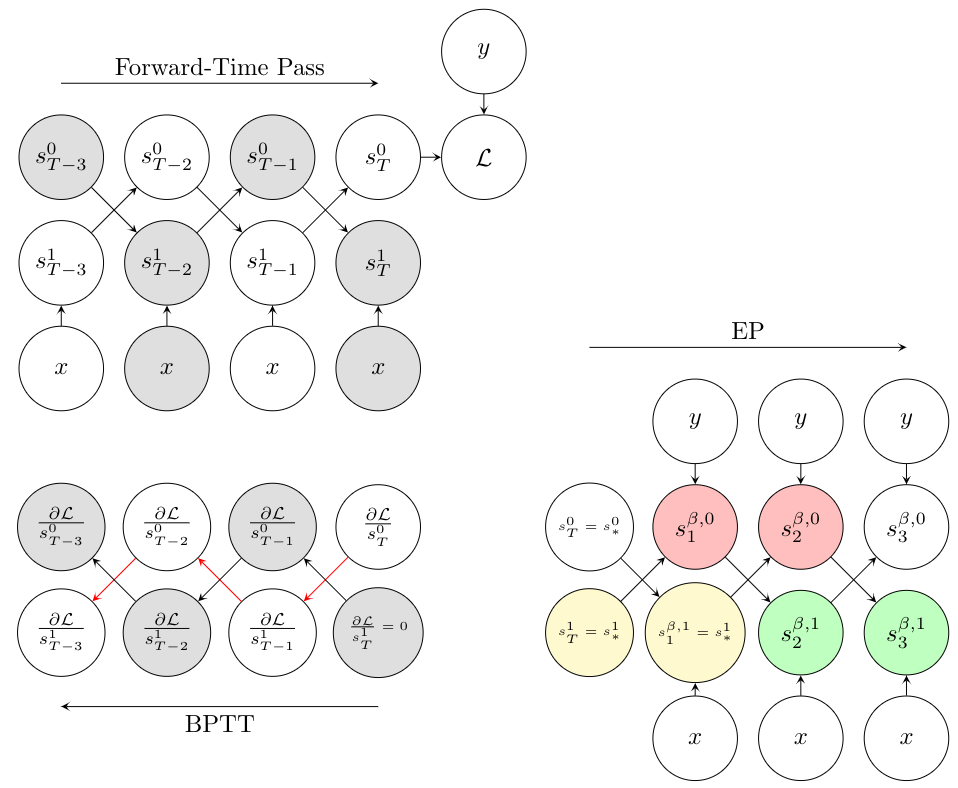}}
\end{center}
\caption{Explanation of the saw teeth shape of the $\nab{s}$ and $\del{s}$ processes in the prototypical setting (layered architecture without lateral or skip-layer connections). {\bfseries Forward-time pass (top left)}: gray nodes in the computational graph indicate nodes that are not involved in the computation of the loss ${\mathcal L}$. {\bfseries BPTT (bottom left)}: red arrows indicate the differentiation path through the output units $s^0$.
The gradients in the gray nodes are equal to $0$.
{\bfseries EP (bottom right)}: nodes of the same color have the same value.
}
\label{saw_teeth}
\end{figure}

In contrast, the saw teeth shaped curves are not observed in the energy based setting.
This is due to the different topology of the computational graph in this setting.
In the energy-based setting, the assumptions under which we have shown the saw teeth shape are not satisfied since neurons are subject to leakage, e.g. $s_{t+1}^1$ depends not just on $s_{t}^0$ but also on $s_{t}^1$.
Therefore the reasoning developed above no longer holds.

\newpage
\section{Convolutional model (subsection \ref{subsec-conv}) }
\label{exp:conv}

\begin{figure}[ht!]
\begin{center}
   \fbox{\includegraphics[scale=0.28]{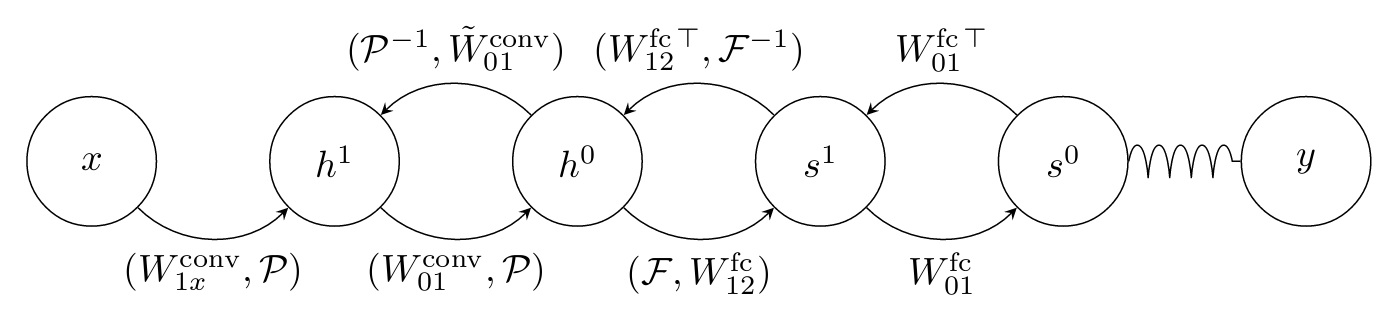}}
\end{center}
  \caption{Convolutional architecture. Summary of the operations, notations and conventions adopted in this section.}
\end{figure}
\paragraph{Definition of the operations.}
In this section, we define the following operations:

\begin{itemize}

\item the \emph{convolution} of a filter W of size F with $C_{\rm out}$ output channels and $C_{\rm in}$ input channels by a vector X as:

\begin{equation}
    (W*X)_{c_{\rm out},i,j}:=\sum_{c_{\rm int} = 1}^{C_{\rm in}}\sum_{r,s=1}^{F}W_{c_{\rm out}, c_{\rm in}, r, s}X_{c_{\rm in }, i+r-1, j+s-1} \mbox{.}
    \label{defconv},
\end{equation}

\item the associated \emph{transpose convolution} is defined as the convolution of kernel $\tilde{W}$ (also called "flipped kernel"):

\begin{equation}
    \tilde{W}_{c_{in}, c_{out, r, s}} = W_{c_{out}, c_{in}, F-r+1, F-s+1},
\end{equation}

with an input padded with $\tilde{P}=F-1 -P$ where P denotes the padding applied in the forward convolution: in this way transpose convolution recovers the original input size before convolution. Whenever $\tilde{W}$ is applied on a vector, we shall implicitly assume this padding.

\item We define the \emph{general dot product} between two vectors $X^1$ and $X^2$ as:

\begin{equation}
    X^{1} \bullet X^{2} = \sum_{c_{\rm in} = 1}^{C_{\rm in}}\sum_{i, j = 1}^{d} X^{1}_{c_{\rm in}, i, j}X^{2}_{c_{\rm in}, i, j}
    \label{defdot}.
\end{equation}

\item We define the \emph{pooling} operation with filter size F and stride F as:

\begin{equation}
    \mathcal{P}(X; F)_{c, i, j}=\max_{r,s \in [0, F-1]} \left \{X_{c, F(i-1)+1+r, F(j-1)+1+s} \right \}.
\end{equation}

We also introduce the relative indices within a pooling zone for which the maximum is reached as:

\begin{equation}
    {\rm ind}(X; F)_{c, i, j}=\argmax_{r,s \in [0, F-1]} \left \{X_{c, F(i-1)+1+r, F(j-1)+1+s} \right \} = (r^*(X,i), s^*(X,j)).
\end{equation}

\item We define the \emph{inverse pooling} operation as:

\begin{equation}
    \mathcal{P}^{-1}(Y, {\rm ind}(X))_{c,p,q}    
    =\left\{
\begin{array}{ll}
Y_{c, \ceil{p/F}, \ceil{q/F}}\mbox{if    } &p=F(\ceil{p/F}-1) + 1 + r^*(X,\ceil{p/F}),\\ 
&q=F(\ceil{q/F}-1) + 1 + s^*(X,\ceil{q/F}) \\
0 & \mbox{otherwise}
\end{array}
\right.
\label{definvpool}
\end{equation}

In layman terms, the inverse pooling operation applied to a vector $Y$ given the indices of another vector $X$ up-samples $Y$ to a vector of the same size of $X$ with the elements of Y located at the maximal elements of $X$ within each pooling zone, and zero elsewhere.

Note that Eq.~(\ref{definvpool}) can be written more conveniently as:

\begin{equation}
\mathcal{P}^{-1}(Y, {\rm ind}(X))_{c,p,q} = \sum_{i,j}Y_{c,i,j} \cdot \delta_{p, F(i-1) + 1 + r^*(X,i)} \cdot \delta_{q, F(j-1) + 1 + s^*(X,j)}.  
\label{invpool}
\end{equation}

\item The \emph{flattening} operation which maps a vector X into its flattened shape, i.e. $\mathcal{F}: C^{\rm in}\times D \times D \rightarrow 1\times C^{\rm in}D^2$. We denote its inverse operation, i.e. the \emph{inverse flattening operation} as $\mathcal{F}^{-1}$.

\end{itemize}

\paragraph{Equations.} The model is a layered architecture composed of a fully connected part and a convolutional part. We therefore distinguish between the flat layers (i.e. those of the fully connected part) and the convolutional layers (i.e. those of the convolutional part). We denote $N_{\rm fc}$ and $N_{\rm conv}$ the number of flat layers and of convolutional layers respectively. 

As previously, layers are labelled in a backward fashion: $s^0$ labels the output layer, $s^1$ the first hidden starting from the output layer (i.e. the first flat layer), and $s^{N_{\rm fc} - 1}$ the last flat layer. Fully connected layers are bi-dimensional\footnote{Three-dimensional in practice, considering the mini-batch dimension.}, i.e. $s_{i,j}$ where i and j label one pixel. 

The layer $h^{0}$ denotes the first convolutional layer that is being flattened before being fed to the classifier part. From there on, $h^{1}$ denotes the second convolutional layer, $h^{N_{\rm conv} - 1}$ the last convolutional layer and $h^{N_{\rm conv}}$ labels the visible layer. Convolutional layers are three-dimensional \footnote{Four-dimensional in pratice, considering the mini-batch dimension.}, i.e. $s_{c,i,j}$ where c labels a channel, i and j label one pixel of this channel.

A convolutional layer $h_{n}$ is deduced from an \emph{upstream} convolutional layer $h_{n-1}$ by the composition of a convolution and a pooling operation, which we shall respectively denote by $*$ and $\mathcal{P}$. Conversely, a convolutional layer $h_{n}$ is deduced from a \emph{downstream} convolutional layer $h_{n+1}$ by the composition of a unpooling operation and of a transpose convolution. We note $W^{\rm fc}$ and $W^{\rm conv}$ the fully connected weights and the convolutional filters respectively, so that $W^{\rm fc}$ is a two-order tensor and $W^{\rm conv}$ is a four order tensor, i.e. $W^{\rm conv}_{c_{out}, c_{in}, i, j}$ is the element $(i,j)$ of the feature map connecting the input channel $c_{in}$ to the output channel $c_{out}$. We denote the filter size by F. We keep the same notation $x$ for the input data.

With this set of notations, the equations in the fully connected layers read in the first phase:

\begin{equation*}
\forall t \in [0, T]:
\left\{
\begin{array}{ll}
    s^0_{t + 1} &= \sigma \left (W^{\rm fc}_{01}\cdot s^{1}_{t}\right) \qquad \mbox{(output layer)}\\
    s^n_{t + 1} &= \sigma \left(W^{\rm fc}_{n n+1}\cdot s^{n+1}_{t} + W^{{\rm fc}^\top}_{n -1 n}\cdot s^{n-1}_{t} \right) \qquad \forall n \in [1, N_{\rm fc} - 2]\\
    s^{N_{\rm fc} - 1}_{t+1} &= \sigma\left(W^{\rm fc}_{N_{\rm fc} - 1, N_{\rm fc}}\cdot \mathcal{F}(h_{t}^{0}) + W^{{\rm fc}^\top}_{N_{\rm fc} - 2, N_{\rm fc} - 1}\cdot s_{t}^{N_{\rm fc} - 2}\right) \qquad \mbox{(last fully connected layer)}
\end{array}, 
\right.
\end{equation*}

and in the second phase: 

\begin{equation*}
\forall t \in [0, T]:
\left\{
\begin{array}{ll}
    s^0_{t + 1} &= \sigma \left (W^{\rm fc}_{01}\cdot s^{1}_{t}\right) + \beta(y - s^0)\qquad \mbox{(nudged output layer)}\\
    s^n_{t + 1} &= \sigma \left(W^{\rm fc}_{n n+1}\cdot s^{n+1}_{t} + W^{{\rm fc}^\top}_{n -1 n}\cdot s^{n-1}_{t} \right) \qquad \forall n \in [1, N_{\rm fc} - 2]\\
    s^{N_{\rm fc} - 1}_{t+1} &= \sigma\left(W^{\rm fc}_{N_{\rm fc} - 1, N_{\rm fc}}\cdot \mathcal{F}(h_{t}^{0}) + W^{{\rm fc}^\top}_{N_{\rm fc} - 2, N_{\rm fc} - 1}\cdot s_{t}^{N_{\rm fc} - 2}\right) \qquad \mbox{(last fully connected layer)}
\end{array}, 
\right.
\end{equation*}

where y denotes the target. Conversely, convolutional layers read the following set of equations at all time: 

\begin{equation*}
\forall t \in [0, T]:
\left\{
\begin{array}{ll}
    h^{0}_{t+1} &= \sigma \left(\mathcal{P}\left(W^{\rm conv}_{01}*h^1_t\right) + \mathcal{F}^{-1} \left(W^{{\rm fc}^\top}_{N_{\rm fc} - 1, N_{\rm fc}}\cdot s_{t}^{N_{\rm fc} -1}\right)\right) \qquad \mbox{(first convolutional layer)} \\
    h^{n}_{t+1} &= \sigma \left( \mathcal{P}\left(W^{\rm conv}_{n, n+1}*h^{n+1}_t\right) + \tilde{W}_{n-1, n}^{\rm conv}*\mathcal{P}^{-1}\left(h^{n-1}_{t}, {\rm ind}(W^{\rm conv}_{n-1, n}*h^{n}_{t-1})\right)\right) \forall n \in [1, N_{\rm conv} - 1]
\end{array}, 
\right.
\end{equation*}

where by convention $h^{N_{\rm conv}}=x$. From here on, we shall omit the second argument of inverse pooling $\mathcal{P}^{-1}$ - i.e. the locations of the maximal neuron values before applying pooling - to improve readability of the equations and proofs. Considering the function:

\begin{align*}
    \Phi(x, s_{0}, \cdots, s_{N_{\rm fc} - 1}, h_{0}, \cdots, h_{N_{\rm fc} - 1}) &= 
    \sum_{n=0}^{N_{\rm fc - 1}} s^{n \top}\cdot W_{n, n+1}^{\rm fc}\cdot s^{n+1}
    + s^{N_{\rm fc} - 1}\cdot W^{\rm fc}_{N_{\rm fc} - 1, N_{\rm fc}}\cdot \mathcal{F}(h_{t}^0)\\
    &+\sum_{n = 1}^{N_{\rm conv} - 1} h^{n}\bullet\mathcal{P}\left(W_{n, n+1}^{\rm conv}*h^{n+1}\right), 
\end{align*}

and ignoring the activation function, we have:

\begin{equation}
   \left\{
\begin{array}{ll}
\forall n \in [0, N_{\rm fc}-1]: \qquad s_t^n \approx  \frac{\partial \Phi}{\partial s^n}(x, s_{0}, \cdots, s_{N_{\rm fc} - 1}, h_{0}, \cdots, h_{N_{\rm fc} - 1}) \\
\forall n \in [0, N_{\rm conv}-1]: \qquad h_t^n \approx  \frac{\partial \Phi}{\partial h^n}(x, s_{0}, \cdots, s_{N_{\rm fc} - 1}, h_{0}, \cdots, h_{N_{\rm fc} - 1}) \\
\end{array}, 
\right. 
\label{approxdyn}
\end{equation}

so that in this case, we  define the EP error processes for the parameters $\theta = \{ W^{\rm fc}_{nn+1}, W^{\rm conv}_{nn+1} \}$ as:

\begin{equation*}
\forall t \in [0, K], \forall n \in [0, N_{\rm fc}-2]: \qquad \Delta_{W_{n n+1}^{\rm fc}}^{\rm EP}(t)  =
\frac{1}{\beta}\left(s_{T+t+1}^n\cdot s_{T+t+1}^{{n+1}^\top} - s_{T+t}^n\cdot s_{T+t}^{{n+1}^\top}  \right) 
\end{equation*}

\begin{equation*}
\forall t \in [0, K]: \qquad \Delta_{W_{N_{\rm fc} - 1, N_{\rm fc}}^{\rm fc}}^{\rm EP}(t)  =
\frac{1}{\beta}\left(s_{T+t+1}^{N_{\rm fc} - 1}\cdot \mathcal{F}\left(h^0_{T+t+1}\right)^\top - s_{T+t}^{N_{\rm fc} - 1}\cdot \mathcal{F}\left(h^0_{T+t}\right)^\top \right)
\end{equation*}

\begin{equation}
\forall t \in [0, K], \forall n \in [0, N_{\rm conv}-2]:\qquad \Delta_{W_{n n+1}^{\rm conv}}^{\rm EP}(t)  =  \frac{1}{\beta} \left(\mathcal{P}^{-1}(h^{n}_{T+t+1})*h^{n+1}_{T+t+1} - \mathcal{P}^{-1}(h^{n}_{T+t})*h^{n+1}_{T+t} \right)  
\label{deltaconv}
\end{equation}

To further justify Eq.~(\ref{approxdyn}) and Eq.~(\ref{deltaconv}), we state and prove the following lemma. 

\begin{lma}
\label{lma:conv}
Taking:
\begin{equation*}
\Phi = Y \bullet \mathcal{P}\left(W * X \right), 
\end{equation*}
and denoting $Z=W*X$, we have:

\begin{equation}
\frac{\partial \Phi}{\partial Z} = \mathcal{P}^{-1}\left (Y\right)
\label{lma1}
\end{equation}

\begin{equation}
\frac{\partial \Phi}{\partial X} = \tilde{W}*\mathcal{P}^{-1}\left (Y\right)  
\label{lma2}
\end{equation}

\begin{equation}
\frac{\partial \Phi}{\partial W} = \mathcal{P}^{-1}\left (Y\right)*X    
\label{lma3}
\end{equation}

\begin{equation}
\frac{\partial \Phi}{\partial Y} = \mathcal{P}\left (W*X\right)
\label{lma4}
\end{equation}

\end{lma}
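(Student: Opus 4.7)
The plan is to establish the four identities in a specific order: first the easy Eq.~(\ref{lma4}), then Eq.~(\ref{lma1}) by a direct computation that unfolds the pooling operation, and finally Eq.~(\ref{lma2}) and Eq.~(\ref{lma3}) as consequences of the chain rule applied through $Z = W * X$. The key observation that makes everything concrete is that max pooling, although non-differentiable everywhere, acts as a linear selection operator in a neighbourhood of any $Z$ at which the argmax locations are unique, so we can treat $r^*, s^*$ as locally constant.

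For Eq.~(\ref{lma4}) the map $Y \mapsto Y \bullet \mathcal{P}(W*X)$ is linear with coefficient $\mathcal{P}(W*X)$, so applying the definition of $\bullet$ from Eq.~(\ref{defdot}) and differentiating entry-by-entry immediately yields $\partial \Phi/\partial Y = \mathcal{P}(W*X)$. For Eq.~(\ref{lma1}) I would write $\mathcal{P}(Z)_{c,i,j} = Z_{c,\,F(i-1)+1+r^*(Z,i),\,F(j-1)+1+s^*(Z,j)}$ and expand $\Phi = \sum_{c,i,j} Y_{c,i,j}\,\mathcal{P}(Z)_{c,i,j}$. At a point where $(r^*,s^*)$ is locally constant in $Z$, differentiating with respect to $Z_{c,p,q}$ produces exactly the Kronecker-delta expression
\[
\frac{\partial \Phi}{\partial Z_{c,p,q}} = \sum_{i,j} Y_{c,i,j}\,\delta_{p,\,F(i-1)+1+r^*(Z,i)}\,\delta_{q,\,F(j-1)+1+s^*(Z,j)},
\]
which is precisely $\mathcal{P}^{-1}(Y)_{c,p,q}$ by the explicit formula in Eq.~(\ref{invpool}).

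With Eq.~(\ref{lma1}) in hand, Eq.~(\ref{lma2}) and Eq.~(\ref{lma3}) follow by the chain rule through the bilinear map $(W, X) \mapsto W * X$. Using the convolution formula from Eq.~(\ref{defconv}), I would differentiate $Z_{c_{\rm out},p,q} = \sum_{c_{\rm in},r,s} W_{c_{\rm out},c_{\rm in},r,s}\, X_{c_{\rm in},p+r-1,q+s-1}$ first with respect to $X_{c_{\rm in},i,j}$ and then with respect to $W_{c_{\rm out},c_{\rm in},r,s}$. Substituting into $\partial \Phi/\partial X = \sum_{c_{\rm out},p,q} (\partial \Phi/\partial Z_{c_{\rm out},p,q})\cdot(\partial Z_{c_{\rm out},p,q}/\partial X)$ produces a sum that, after the reindexing $r = i-p+1$, $s = j-q+1$, matches the cross-correlation between the flipped kernel $\tilde W$ and $\mathcal{P}^{-1}(Y)$, consistent with the padding convention $\tilde P = F - 1 - P$ fixed in the definition of transpose convolution. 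The analogous manipulation for $W$ gives $\mathcal{P}^{-1}(Y) * X$ directly, since the sum over output positions $(p,q)$ is exactly the convolution defined in Eq.~(\ref{defconv}) with $X$ as the input.

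The main obstacle is not analytical but notational: the proof hinges on staying consistent with the indexing, padding and flipping conventions of Eq.~(\ref{defconv})--(\ref{invpool}), in particular making sure that the index changes in the chain-rule step correctly produce the flipped kernel $\tilde W$ and the implicit padding of $\mathcal{P}^{-1}(Y)$ that is assumed whenever $\tilde W$ is applied. The only mathematically delicate point is the non-smoothness of max pooling where the argmax is tied; I would bypass this by restricting attention to a generic $Z$ where the selection is unambiguous, which is enough for the pointwise identities claimed in the lemma and matches how the identities are used downstream (to define $\Delta_{W^{\rm conv}}^{\rm EP}$).
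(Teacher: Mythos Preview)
Your proposal is correct and follows essentially the same route as the paper: establish Eq.~(\ref{lma1}) by direct differentiation through the pooling selection (exploiting that the argmax indices are locally constant, yielding the Kronecker-delta form of Eq.~(\ref{invpool})), then obtain Eq.~(\ref{lma2}) and Eq.~(\ref{lma3}) by the chain rule through $Z = W*X$ with the appropriate index change producing the flipped kernel, and treat Eq.~(\ref{lma4}) as immediate by linearity. The only cosmetic difference is that you dispatch Eq.~(\ref{lma4}) first rather than last.
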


\begin{proof}[Proof of Lemma \ref{lma:conv}]
Let us prove Eq.~(\ref{lma1}). We have:

\begin{align*}
    \frac{\partial \Phi}{\partial Z_{c, x, y}}&=\sum_{c', i, j}Y_{c', i, j}\frac{\partial \mathcal{P}(Z)_{c', i, j}}{\partial Z_{c, x, y}}\\
    &=\sum_{c', i, j}Y_{c', i, j}\frac{\partial Z_{c', F(i-1)+1+r^*(i),F(j-1)+1+s^*(j)}}{\partial Z_{c, x, y}}\\
    &=\sum_{i,j}Y_{c,i,j}\delta_{x,F(i-1)+1+r^*(i)}\delta_{y,F(j-1)+1+s^*(j)}\\
    &=\mathcal{P}^{-1}(Y)_{c,x,y},
\end{align*}
where we used Eq.~(\ref{invpool}) at the last step.

We can now proceed to proving Eq.~(\ref{lma2}). We have:

\begin{align*}
    \frac{\partial \Phi}{\partial X_{c,p,q}}&=\sum_{c',x,y}\frac{\partial \Phi}{\partial Z_{c',x,y}}\cdot\frac{\partial Z_{c',x,y}}{\partial X_{c,p,q}}\\
    &=\sum_{c',x,y}\mathcal{P}^{-1}(Y)_{c',x,y}\cdot\frac{\partial}{\partial X_{c,p,q}}\left (\sum_{c'', r, s} W_{c', c'', r, s}X_{c'', x+r-1, y+s-1}\right)\\
    &=\sum_{c',x,y}\sum_{r,s}\mathcal{P}^{-1}(Y)_{c',x,y} W_{c',c,r,s}\delta_{p,x+r-1}\delta_{q, y + s - 1}\\
    &=\sum_{c',r,s}W_{c',c,r,s}\mathcal{P}^{-1}(Y)_{c',p-(r-1),q-(s-1)}.
\end{align*}

Using the flipped kernel $\tilde{W}$ and performing the change of variable $r\gets F - r + 1$ and $s \gets F - s + 1$, we obtain:

\begin{equation}
\frac{\partial \Phi}{\partial X_{c,p,q}} = \sum_{c',r, s} \tilde{W}_{c, c', r, s}\cdot \mathcal{P}^{-1}(Y)_{c',p+r-F, q+s-F}.
\label{proof-lma2}
\end{equation}

Note in Eq.~(\ref{proof-lma2}) that $\mathcal{P}^{-1}(Y)$ indices can exceed their boundaries. Also, as stated previously, $\mathcal{P}^{-1}(Y)$ should be padded with $\tilde{P} = F - 1 - P$ so that we recover the size of X after transpose convolution. Without loss of generality, we assume $P=0$. We subsequently defined the padded input $\overline{\mathcal{P}^{-1}(Y)}$ as:

\begin{equation}
\overline{\mathcal{P}^{-1}(Y)}_{c,p,q}=
\left\{
\begin{array}{ll}
\mathcal{P}^{-1}(Y)_{c, p-F+1, q-F+1} \mbox{  if   } p,q \in [F, N+F-1]\\
0 \mbox{  if   } p,q \in [1, F-1]\cup[N + F, N +2(F-1)]
\end{array}
\right.,
\end{equation}

where N denotes the dimension of $\mathcal{P}^-1(Y)$. Finally Eq.~(\ref{proof-lma2}) can conveniently be rewritten as:

\begin{equation}
\frac{\partial \Phi}{\partial X_{c,p,q}} = \left(\tilde{W}*\overline{\mathcal{P}^{-1}(Y)}\right)_{p,q}.
\end{equation}

For the sake of readability, the padding is implicitly assumed whenever transpose convolution is performed so that we drop the bar notation. 

We can now proceed to proving Eq.~(\ref{lma3}). We have:

\begin{align*}
    \frac{\partial \Phi}{\partial W_{c',c, r,s}}&=\sum_{c'',x,y}\frac{\partial \Phi}{\partial Z_{c'',x,y}}\cdot\frac{\partial Z_{c'',x,y}}{\partial W_{c', c, r, s}}\\
    &=\sum_{c'',x,y}\mathcal{P}^{-1}(Y)_{c'',x,y}\cdot\frac{\partial}{\partial W_{c', c, r, s}}\left (\sum_{k, r', s'} W_{c'', k, r', s'}X_{k, x+r'-1, y+s'-1}\right)\\
    &=\sum_{x,y}\mathcal{P}^{-1}(Y)_{c',x,y}\cdot X_{c, r + x - 1, s + y - 1} \\
    &= \left (\mathcal{P}^{-1}(Y)*X\right)_{c', c, r, s}
\end{align*}

Finally, proving Eq.~(\ref{lma4}) is straightforward.

\end{proof}

\paragraph{Experiment: theorem demonstration on MNIST.}
We have implemented an architecture with 2 convolution-pooling layers and 1 fully connected layer. The first and second convolution layers are made up of $5\times 5$ kernels with 32 and 64 feature maps respectively. Convolutions are performed without padding and with stride 1. Pooling is performed with $2\times 2$ filters and with stride 2. 

The experimental protocol is the exact same as the one used on the fully connected layered architecture. The only difference is the activation function that we have used here is $\sigma(x)={\rm max}({\rm min}( x, 1), 0)$ which we shall refer to here for convenience as `hard sigmoid function'. Precise values of the hyperparameters $\epsilon$, T, K, beta are given in Tab.~\ref{table-hyp-th}.

We show on Fig.~\ref{th-conv} that $\del{}$ and $-\nab{}$ processes qualitatively very well coincide when presenting one MNIST sample to the network. Looking more carefully, we note that some $\del{s}$ processes collapse to zero. This signals the presence of neurons which saturate to their maximal or minimal values, as an effect of the non linearity used. Consequently, as these neurons cannot move, they cannot carry the error signals. We hypothesize that this accounts for the discrepancy in the results obtained with EP on the convolutional architecture with respect to BPTT. 

\begin{figure}[ht!]
\begin{center}
   \includegraphics[scale=0.5]{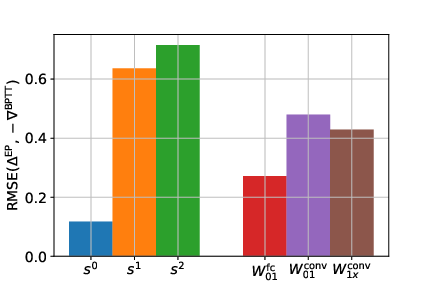}
\end{center}
\caption{RMSE analysis in the convolutional architecture.}
\label{RMSE:conv}
\end{figure}

\newpage

\section{Training experiments (Table \ref{table-results})}
\label{subsec-train}
\paragraph{Simulation framework.} Simulations have been carried out in Pytorch. The code has been attached to the supplementary materials upon submitting this work on the CMT interface. We have also attached a readme.txt with a specification of all dependencies, packages, descriptions of the python files as well as the commands to reproduce all the results presented in this paper. 

\paragraph{Data set.} Training experiments were carried out on the MNIST data set. Training set and test set include 60000 and 10000 samples respectively.
\paragraph{Optimization.} Optimization was performed using stochastic gradient descent with mini-batches of size 20. For each simulation, weights were Glorot-initialized. No regularization technique was used and we did not use the persistent trick of caching and reusing converged states for each data sample between epochs as in \citep{Scellier+Bengio-frontiers2017}. 
\paragraph{Hyperparameter search for EP.} We distinguish between two kinds of hyperparameters: the recurrent hyperparameters - i.e. $T$, $K$ and $\beta$ - and the learning rates. A first guess of the recurrent hyperparameters $T$ and $\beta$ is found by plotting the $\del{}$ and $\nab{}$ processes associated to synapses and neurons to see qualitatively whether the theorem is approximately satisfied, and by conjointly computing the proportions of synapses whose $\del{W}$ processes have the same sign as its $\nab{W}$ processes. $K$ can also be found out of the plots as the number of steps which are required for the gradients to converge. Morever, plotting these processes reveal that gradients are vanishing when going away from the output layer, i.e. they lose up to $~10^{-1}$ in magnitude when going from a layer to the previous (i.e. upstream) layer. We subsequently initialized the learning rates with increasing values going from the output layer to upstreams layers. The typical range of learning rates is $[10^{-3}, 10^{-1}]$, $[10, 1000]$ for T, $[2, 100]$ for K and $[0.01, 1]$ for $\beta$. Hyperparameters where adjusted until having a train error the closest to zero. Finally, in order to obtain minimal recurrent hyperparameters - i.e. smallest T and K possible, both in the energy-based and prototypical setting for a fair comparison - we progressively decreased T and K until the train error increases again. 
\paragraph{Activation functions, update clipping.} For training, we used two kinds of activation functions:

\begin{itemize}
    \item $\sigma(x)=\frac{1}{1+\exp(-4(x-1/2))}$. Although it is a shifted and rescaled sigmoid function, we shall refer to this activation function as `sigmoid'.
    \item $\sigma(x)={\rm max}({\rm min}( x, 1), 0)$. It is the `hard' version of the previous activation function so that we call it here for convenience `hard sigmoid'.
\end{itemize}

The sigmoid function was used for all the training simulations except the convolutional architecture for which we used the hard sigmoid function - see Table~\ref{table-hyp-training}. Also, similarly to \citep{Scellier+Bengio-frontiers2017}, for the energy-based setting we clipped the neuron updates between 0 and 1 so that at each time step, when an update $\Delta s$ was prescribed, we have implemented: $s \gets {\rm max}({\rm min}(s + \Delta s, 1), 0)$. 
\paragraph{Benchmarking EP with respect to BPTT.} In order to compare EP and BPTT directly, for each simulation trial we used the same weight initialization to train the network with EP on the one hand, and with BPTT on the other hand. We also used the same learning rates, and the same recurrent hyperparameters: we used the same $T$ for both algorithms, and we truncated BPTT to $K$ steps, as prescribed by the theory.

\begin{algorithm}[H]{\emph{Input}: static input $x$, parameter $\theta$, learning rate $\alpha$. \\
\emph{Output}: parameter $\theta$.}
\caption{Discrete-time Equilibrium Propagation (EP)}\label{alg-ep}
\begin{algorithmic}[1]
\While{$\theta$ not converged}
\For{each mini-batch x}
\State$\Delta \theta \gets 0$
\For{$t\in[1, T]$}
\State $s_{t+1}\gets \frac{\partial \Phi}{\partial s}(x, s_{t}, \theta)$\Comment{1\textsuperscript{st} phase: common to EP and BPTT}
\EndFor
\For{$t\in[1, K]$}
\State $s_{t+1}^{\beta}\gets \frac{\partial \Phi^{\beta}}{\partial s}(x, s_{t}, \theta)$\Comment{2\textsuperscript{nd} phase: \emph{forward-time} computation}
\State $\del{\theta}\gets\frac{1}{\beta}\left (\frac{\partial \Phi}{\partial \theta}(x, s_{t+1}^{\beta}, \theta) - \frac{\partial \Phi}{\partial \theta}(x, s_{t}^{\beta}, \theta) \right)$
\State $\Delta \theta \gets \Delta \theta + \del{\theta}$
\EndFor
\State $\theta \gets \theta + \alpha\Delta\theta$
\EndFor
\EndWhile
\end{algorithmic}
\end{algorithm}

\begin{algorithm}[H]{\emph{Input}: static input $x$, parameter $\theta$, learning rate $\alpha$. \\
\emph{Output}: parameter $\theta$.}
\caption{Backpropagation Through Time (BPTT)}\label{alg-bptt}
\begin{algorithmic}[1]
\While{$\theta$ not converged}
\For{each mini-batch x}
\State$\Delta \theta \gets 0$
\For{$t\in[1, T]$}
\State $s_{t+1}\gets \frac{\partial \Phi}{\partial s}(x, s_{t}, \theta)$\Comment{1\textsuperscript{st} phase: common to EP and BPTT}
\EndFor
\For{$t\in[1, K]$}

\State $\nab{\theta}\gets \frac{\partial \mathcal{L}}{\partial \theta_{T - t}} $ \Comment{2\textsuperscript{nd} phase: \emph{backward-time} computation}
\State $\Delta \theta \gets \Delta \theta + \nab{\theta}$
\EndFor
\State $\theta \gets \theta - \alpha\Delta\theta$
\EndFor
\EndWhile
\end{algorithmic}
\end{algorithm}

\begin{figure}[H]
\begin{center}
   \includegraphics[scale=0.6]{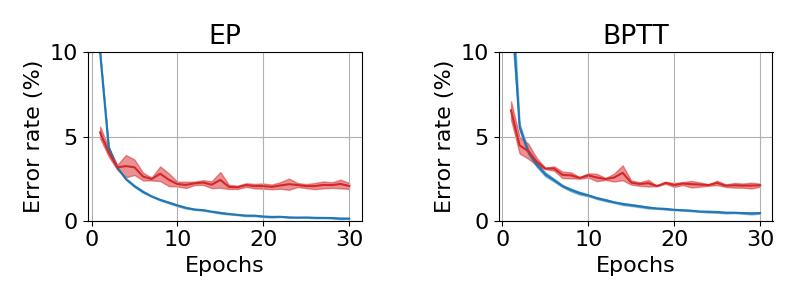}
\end{center}
\caption{Train and test error achieved on MNIST by the fully connected layered architecture with one hidden layer (784-512-10) in the energy-based setting throughout learning, over five trials. Plain lines indicate mean, shaded zones delimiting mean plus/minus standard deviation.}
\end{figure}

\begin{figure}[H]
\begin{center}
   \includegraphics[scale=0.6]{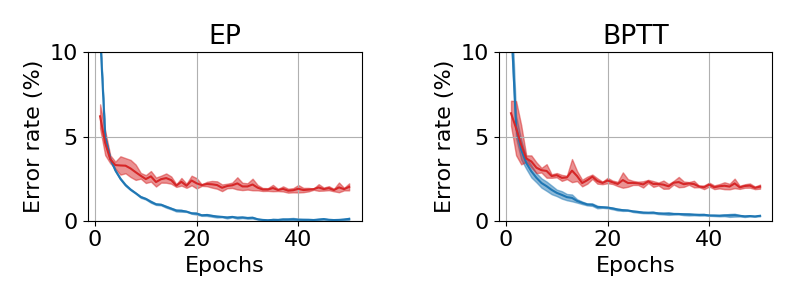}
\end{center}
\caption{Train and test error achieved on MNIST by the fully connected layered architecture with two hidden layers (784-512-512-10) in the energy-based setting throughout learning, over five trials. Plain lines indicate mean, shaded zones delimiting mean plus/minus standard deviation.}
\end{figure}

\begin{figure}[H]
\begin{center}
   \includegraphics[scale=0.6]{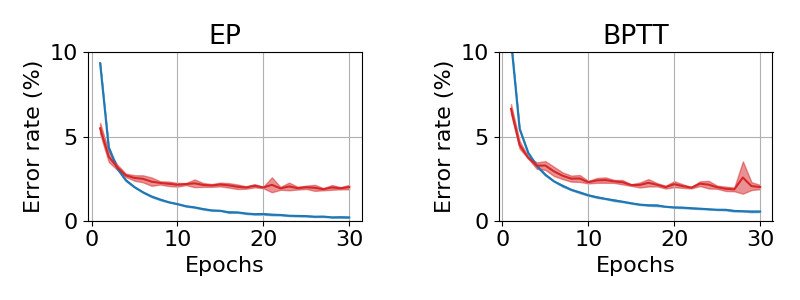}
\end{center}
\caption{Train and test error achieved on MNIST by the fully connected layered architecture with one hidden layer (784-512-10) in the prototypical setting throughout learning, over five trials. Plain lines indicate mean, shaded zones delimiting mean plus/minus standard deviation.}
\end{figure}

\begin{figure}[H]
\begin{center}
   \includegraphics[scale=0.6]{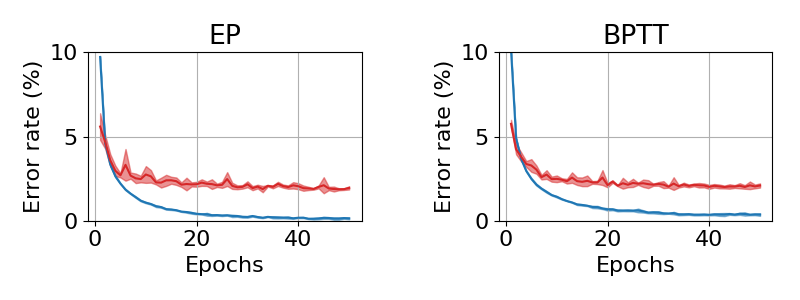}
\end{center}
\caption{Train and test error achieved on MNIST by the fully connected layered architecture with two hidden layers (784-512-512-10) in the prototypical setting throughout learning, over five trials. Plain lines indicate mean, shaded zones delimiting mean plus/minus standard deviation.}
\end{figure}

\begin{figure}[H]
\begin{center}
   \includegraphics[scale=0.6]{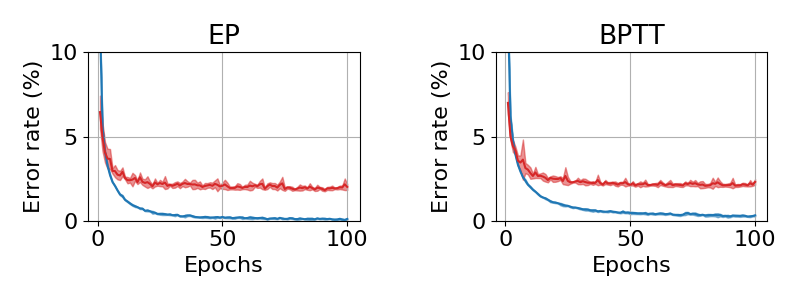}
\end{center}
\caption{Train and test error achieved on MNIST by the fully connected layered architecture with three hidden layers (784-512-512-512-10) in the prototypical setting throughout learning, over five trials. Plain lines indicate mean, shaded zones delimiting mean plus/minus standard deviation.}
\end{figure}

\begin{figure}[H]
\begin{center}
   \includegraphics[scale=0.6]{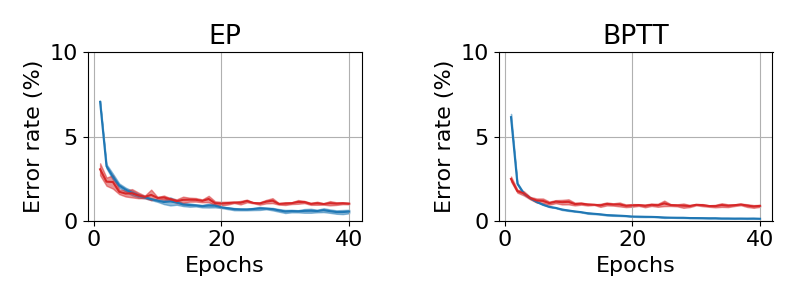}
\end{center}
\caption{Train and test error achieved on MNIST by the convolutional architecture in the prototypical setting throughout learning, over five trials. Plain lines indicate mean, shaded zones delimiting mean plus/minus standard deviation.}
\end{figure}

\newpage

\begin{table}
\begin{center}
    \caption{Table of hyperparameters used to demonstrate Theorem \ref{thm:main}. "EB" and "P" respectively denote "energy-based" and "prototypical", "-$\#$h" stands for the number of hidden layers.}
    \label{table-hyp-th}
\begin{tabular}{@{}lcccccc@{}} \toprule
{}&{Activation}&{T}&{K}&{$\beta$}&{$\epsilon$}\\
\midrule
  {Toy model} &{tanh}&{5000}&{80}&{0.01}&{0.08} \\ 
   \midrule
{EB-1h}&{tanh}&{800}&{80}&{0.001}&{0.08} \\ 
   \midrule
  {EB-2h} &{tanh}&{5000}&{150}&{0.01}&{0.08} \\ 
   \midrule
  {EB-3h} &{tanh}&{30000}&{200}&{0.02}&{0.08} \\ 
   \midrule   
{P-1h}&{tanh}&{150}&{10}&{0.01}&{-} \\ 
   \midrule
  {P-2h}&{tanh}&{1500}&{40}&{0.01}&{-} \\ 
   \midrule
  {P-3h} &{tanh}&{5000}&{40}&{0.015}&{-} \\ 
   \midrule
  {P-conv} &{hard sigmoid}&{5000}&{10}&{0.02}&{-} \\   
\bottomrule
\end{tabular}
\end{center}
\end{table}

\begin{table}

    \begin{center}
    \caption{Table of hyperparameters used for training. "EB" and "P" respectively denote "energy-based" and "prototypical", "-$\#$h" stands for the number of hidden layers.}
    \label{table-hyp-training}
\begin{tabular}{@{}lcccccccc@{}} \toprule
{}&{Activation}&{T}&{K}&{$\beta$}&{$\epsilon$}&{Epochs}&{Learning rates}\\
\midrule
{EB-1h}&{sigmoid}&{100}&{12}&{0.5}&{0.2}&{30}&{0.1-0.05} \\ 
   \midrule
{EB-2h}&{sigmoid}&{500}&{40}&{0.8}&{0.2}&{50}&{0.4-0.1-0.01} \\ 
   \midrule
{P-1h}&{sigmoid}&{30}&{10}&{0.1}&{-}&{30}&{0.08-0.04} \\ 
   \midrule
{P-2h}&{sigmoid}&{100}&{20}&{0.5}&{-}&{50}&{0.2-0.05-0.005} \\ 
   \midrule
{P-3h}&{sigmoid}&{180}&{20}&{0.5}&{-}&{100}&{0.2-0.05-0.01-0.002} \\ 
   \midrule
{P-conv}&{hard sigmoid}&{200}&{10}&{0.4}&{-}&{40}&{0.15-0.035-0.015} \\   
\bottomrule
\end{tabular}
    \end{center}
\end{table}


\end{document}